\documentclass{article}

\usepackage{microtype}
\usepackage{graphicx}
\usepackage{subcaption}
\usepackage{booktabs} 
\usepackage{dirtytalk}
\usepackage{hyperref}

\usepackage[preprint]{icml2026}
\usepackage{amsmath}
\usepackage{amssymb}
\usepackage{mathtools}
\usepackage{amsthm}
\allowdisplaybreaks
\usepackage{mathtools}
\usepackage{thmtools}  
\usepackage{tikz}
\usetikzlibrary{shapes.geometric}
\usepackage{tikz-cd}
\tikzcdset{
  arrow style = tikz
}
\usepackage[dvipsnames]{xcolor}
\usepackage[capitalize,noabbrev]{cleveref}
\usepackage{placeins}

\newcommand{\algorithmicreturn}{\textbf{return}}
\newcommand{\RETURN}{\STATE \algorithmicreturn\ }

\newcommand{\R}{\mathbb{R}}  
\newcommand{\rR}{\mathcal{R}}
\newcommand{\F}{\mathcal{F}}
\newcommand{\cC}{\mathcal{C}}

\newcommand{\Var}{Var}

\newcommand{\VR}{VR}

\usepackage{longtable}
\usepackage{array}
\usepackage{booktabs}

\DeclareMathOperator*{\argmin}{\mathsf{argmin}}

\DeclareMathOperator{\Cov}{\mathsf{Cov}}

\DeclareMathOperator{\MRL}{\mathsf{MRL}}

\DeclareMathOperator{\compscore}{\mathsf{compscore}}

\DeclareMathOperator{\conv}{\mathsf{conv}}

\theoremstyle{plain}
\newtheorem{theorem}{Theorem}[section]
\newtheorem{proposition}[theorem]{Proposition}
\newtheorem{example}[theorem]{Example}
\newtheorem{lemma}[theorem]{Lemma}

\theoremstyle{definition}
\newtheorem{definition}[theorem]{Definition}

\theoremstyle{remark}
\newtheorem{remark}[theorem]{Remark}

\icmltitlerunning{Learning Coherent Representations: A Topological Approach to Interpretability}

\begin{document} 
\twocolumn[
  \icmltitle{Learning Coherent Representations: A Topological Approach to Interpretability}

  \icmlsetsymbol{equal}{*}

  \begin{icmlauthorlist}
  \icmlauthor{Sigurd Gaukstad}{math}
  \icmlauthor{Melvin Vaupel}{math}
  \icmlauthor{Valdemar Kargård Olsen}{kavli}
  \icmlauthor{Erik Hermansen}{math}
  \icmlauthor{Benjamin A. Dunn}{math}
\end{icmlauthorlist}

\icmlaffiliation{math}{Department of Mathematical Sciences, NTNU, N-7491 Trondheim, Norway}
\icmlaffiliation{kavli}{Kavli Institute for Systems Neuroscience, NTNU, Trondheim, Norway}

\icmlcorrespondingauthor{Sigurd Gaukstad}{sigurd.gaukstad@ffi.no}

  \vskip 0.3in
]

\printAffiliationsAndNotice{}  
\begin{abstract}
Deep neural networks learn representations where individual features often lack interpretable meaning; a single neuron may activate for scattered, unrelated inputs. We introduce coherence, a geometric property inspired by neural coding in the brain, where neurons like grid cells and head direction cells respond to contiguous regions of state space. A non-negative matrix is coherent if each row (sample) attends to geometrically clustered columns (features) and vice versa, and in addition every sample is well described by some feature and every feature is needed by some sample. We prove that coherent matrices induce a bounded interleaving between the Vietoris-Rips filtrations of samples and features, guaranteeing that both spaces share compatible topological structure. This geometric constraint facilitates interpretability. For example, if data lies on a circle, coherent features must tile that circle into contiguous arcs. We introduce \textsc{Coh}, a differentiable objective function based on Fréchet variance that enforces coherence during training. Unlike sparsity, which bounds how many samples a feature activates on, coherence bounds \emph{which} samples, requiring geometric connectivity rather than only rarity. This yields not just interpretable features but an interpretable feature space. We validate \textsc{Coh} in an auto-encoder using synthetic and rotated \textsc{MNIST} datasets and in a token embedding of \textsc{BERT} using language data.
\end{abstract}

\section{Introduction}

Deep neural networks (DNNs) trained on classification tasks progressively transform data representations such that class manifolds become geometrically separable \cite{Cohen2020}. This property, that samples from the same class cluster together in latent space, emerges naturally from the classification objective and underlies the success of transfer learning and feature visualization. However, this says nothing about the features themselves: individual latent dimensions may respond to scattered, incoherent subsets of the data \cite{elhage2022toy}, limiting interpretability.
For unsupervised approaches such as auto-encoders, the situation is even worse. Without class labels to guide separation, networks can learn representations where semantically related samples are spread across the latent space, and where individual features lack any coherent meaning. Sparsity regularization ($L^1$) encourages features to activate on few samples, but does not ensure those samples are geometrically related, a sparse feature may fire on disconnected regions of the data manifold.
Strikingly, biological neural circuits achieve interpretable representations without explicit supervision. Grid cells in the entorhinal cortex tile physical space with periodic firing fields \cite{hafting2005microstructure, Gardner2022}. Head direction cells activate for specific orientations, with each neuron covering a contiguous arc of angles \cite{taube1990head, Rybakken2019}. These neural codes exhibit locality: each neuron's activity is concentrated on a coherent region of the underlying state space. This locality is precisely what makes these cells interpretable, one can read off an animal's position or heading from the concurrent neural activity because the mapping between state and response is geometrically organized.
This observation suggests that locality is not merely a byproduct of evolutionary optimization, but a design principle for interpretable neural codes. Can we formalize this principle and impose it on artificial neural networks?

We focus primarily on unsupervised settings — autoencoder bottlenecks and transformer token embeddings — where coherence regularization must induce structure without label guidance. In supervised classifiers, cross-entropy loss naturally collapses within-class variation, leaving little structure for coherence to preserve such that the resulting topology is approximately discrete. Thus, applying the method that we develop in this work to supervised settings where within-class structure matters is a direction for future work. Aside from the observations of neural activity in the brain, this work is greatly inspired by the Dowker Duality \cite{dowker}, a famous duality between the topology of rows and columns of a binary matrix, and we consider our work as a geometric analogue, where we do not get Dowker Duality for free, but rather have to define a class of matrices where the geometric Vietoris-Rips row and column filtrations, as defined in \cref{app:prel}, are similarly interleaved.

\begin{figure}[tb]
  \begin{center}
    \centerline{\includegraphics[width=\columnwidth]{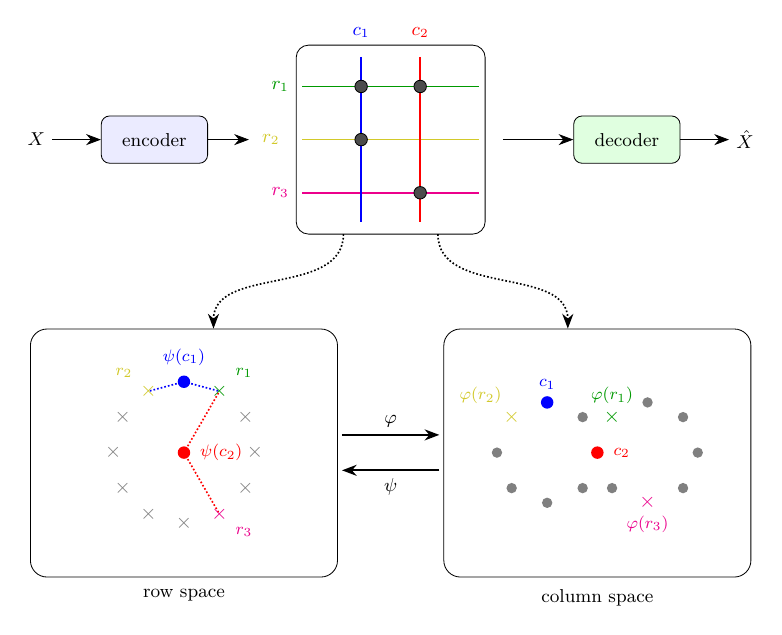}}
    \caption{Given an auto-encoder with a non-negative activation function, we can treat the encoded latent space as a matrix $M$, whose rows are samples and columns are features. Most often, the \emph{topology} of the samples and features are vastly different. We regularize these spaces to be topologically similar by creating an explicit interleaving between the filtered simplicial complexes induced by the latent samples and latent features by using the barycentric maps $\phi$ and $\psi$. Our definition of a matrix to be $\epsilon$-\emph{local} implies that any sample $r_i$ is mapped $\epsilon^{1/2}$-close to some feature $c_k$. For instance, the position of their barycentric images in the weight space of all rows shows that the column $c_1$ is more local than the column $c_2$. Moreover, our definition of $\epsilon$-covering implies that around any sample $r_k$ there is some feature that maps $\epsilon^{1/2}$-close to it under the barycentric map.
    When a matrix $M$ (latent space) has both these properties we call it $\epsilon$-coherent, and together with a non-expanding assumption on the barycentric maps, this implies that the latent samples and latent features are topologically $\epsilon^{1/2}$-similar.
    }
    \label{fig:locality_cartoon}
  \end{center}
\end{figure}
\subsection{Contributions}
\begin{enumerate}
         \item \textbf{Definition of coherence.} We define a non-negative matrix to be \emph{$\epsilon$-coherent} if it is both \emph{$\epsilon$-local} (each row attends to a geometrically clustered set of columns, and vice versa) and an \emph{$\epsilon$-covering} (each row is matched by some column's barycenter, and vice versa). We prove that coherent matrices induce a bounded interleaving between the Vietoris--Rips filtrations of samples and features, so the two spaces share compatible topology.
    \item \textbf{Differentiable objective function.} We derive \textsc{Coh}, a differentiable loss based on Fréchet variance, addable to any architecture with non-negative activations. Its two terms penalize the locality and covering quantities from the definition above, driving representations toward $\epsilon$-coherence.
    \item \textbf{Empirical validation.} We show that \textsc{Coh} produces interpretable feature spaces across distinct settings: it recovers the expected topology in synthetic and rotated-\textsc{MNIST} autoencoders, and in a \textsc{BERT} token embedding it yields features aligned with human-readable categories (e.g., years, kinship terms, and place names, but also units of measurement, hedging adverbs, and directional prepositions), whereas non-negativity alone yields essentially none.
\end{enumerate}

Our work connects topological data analysis, neuroscience, and representation learning, providing both theoretical foundations and a practical objective function for learning interpretable latent spaces.

\subsection{Related Work}
\textbf{Interpretability.}

As models scale, understanding learned representations becomes critical for safety and debugging~\cite{olah2020zoom}. 
A representation is interpretable if individual features correspond to human-understandable concepts.

Cohen et al.~\cite{Cohen2020} showed that classification training progressively untangles class manifolds, making them linearly separable in later layers; Mamou et al.~\cite{mamou2020emergence} observed similar separation in language models. These results characterize \emph{sample} geometry but do not address whether individual features are interpretable. Sparse coding~\cite{olshausen1996sparse} and recent sparse auto-encoders for mechanistic interpretability~\cite{bricken2023monosemanticity,cunningham2023sparse} address this by encouraging features to activate rarely, reducing polysemanticity. However, sparsity constrains \emph{how many} samples a feature activates on, not \emph{which}—a sparse feature may fire on geometrically scattered inputs. Coherence explicitly requires that active samples be spatially clustered, providing a geometric guarantee that features align with data structure.

\textbf{Geometric deep learning.}
Geometric deep learning~\cite{geometric_learning} incorporates known symmetries (translation, rotation, permutation) into network architectures, reducing the hypothesis space and improving generalization. Our approach is complementary: rather than encoding symmetries architecturally, we regularize the learned representation to exhibit geometric structure, specifically requiring that feature and sample spaces share compatible topology.

\textbf{Topological auto-encoders.}
Topological data analysis (TDA) provides robust tools for characterizing data shape~\cite{carlsson2009topology}, with stability results ensuring that small perturbations in data yield small changes in topological descriptors~\cite{chazal2009proximity}. Several lines of work use tools from TDA to regularize neural networks such as ~\cite{moor2020topological,Hofer19, hu}. These approaches aim to \emph{preserve} input topology in the latent representation or to give the output space a target topology. Our goal differs fundamentally: we do not preserve topology but \emph{mirror} it between latent samples and latent features. Another difference is that we operate at the level of simplicial filtrations, via explicit interleaving maps, avoiding the need to choose a homology degree, which these methods are bound by.

\textbf{Neural coding.}
Our work draws inspiration from neuroscience, where grid cells~\cite{hafting2005microstructure} and head direction cells~\cite{taube1990head} exhibit local receptive fields. These local fields allow us to treat every row or column of the data matrix as a “point”, revealing the topological space formed by the data, as shown for grid cells \cite{Gardner2022} and head direction cells \cite{Rybakken2019}.

\textbf{Similarity-preserving networks.}
Sengupta et al.~\cite{sengupta2018manifold} showed that non-negative similarity-preserving objectives yield localized receptive fields tiling input manifolds. Our work differs in requiring bidirectional coherence. The feature space must share the sample space's topology. This provides explicit interleaving guarantees rather than characterizing receptive field shapes.

\section{Background}\label{sec:Background}

We briefly establish notation and point to Appendix~\ref{app:prel} for full definitions. 
Given a finite set $P$ in a metric space, the \emph{Vietoris-Rips filtration} $\{\VR_t(P)\}_{t \geq 0}$ is a nested family of simplicial complexes capturing the topology of $P$ at increasing scales. 
Two filtrations are \emph{$\delta$-interleaved} if there exist maps between them that approximately commute with the inclusions, up to a $\delta$-shift in scale (see Definition~\ref{def:interleaving}). 
The \emph{interleaving distance} bounds the bottleneck distance between persistence diagrams, ensuring similar homological features.

Intuitively, two spaces that are $\delta$-interleaved are 'topologically $\delta$-similar', i.e., they have the same large-scale shape and differ only at scales below $\delta$.

\begin{figure}
  \begin{center}
    \centerline{\includegraphics[width=\columnwidth]{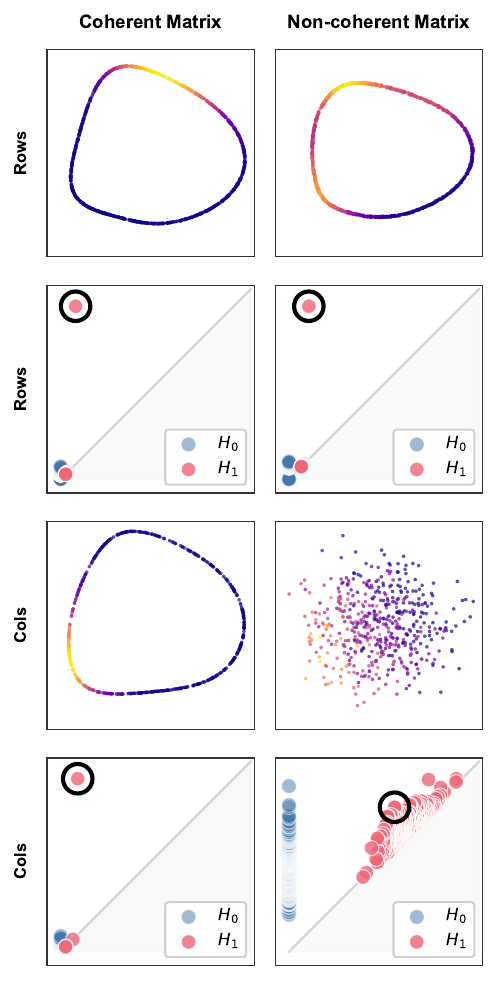}}
    \caption{
      Coherent vs non-coherent matrices derived from circular state space. 
      \textit{Left}: Coherence $\varepsilon=0.18$. 
      \textit{Right}: Non-coherent $\varepsilon=1.46$. In rows one and three, we show PCA projection of rows and columns colored by the activation of a column and row. In row two and four we show persistence diagrams of rows and columns where we highlight the most persistent $H_1$.
      Note that only the coherent matrix exhibits matching circular topology in both rows and columns.
    }
    \label{fig:locality}
  \end{center}
\end{figure}

\section{Coherent Matrices}
 The goal of this section is to introduce the concept of coherent matrices and show how it induces a natural interleaving between metric spaces associated to the rows and columns of the matrix $M$. The pairwise distances between row and column vectors respectively can lie on vastly different scales for non-square matrices. In comparing their induced topologies we want to be agnostic to this difference. For practical purposes we remedy it by normalizing the row and column metrics by scaling by a factor of one over the mean of pairwise distances. Many of the following results can be done for any $L^p$-norm, but we will stick to the Euclidean norm as this makes barycentric maps linear with a simple, closed well-defined form that is easy to compute in training. We note that the choice of the Euclidean norm may suffer in high dimensions, but is a natural choice for our method. 

Throughout this section, we assume we have a non-negative matrix $M \in \R^{m \times n}_+$ with \textbf{no zero rows or zero columns}. We will denote by 
\[
\mathcal{R} = \{r_1, \dots, r_m\} \subset \R^n \quad \text{and}\quad \mathcal{C} = \{c_1, \dots, c_n\} \subset \R^m 
\]
the set of rows and columns of $M$. We want the non-negative rows and columns in the matrix to act like probability distributions over the columns and rows respectively, hence we define the notion of normalization kernels.

\begin{definition}\label{def:general_weights}Let $\sigma_R: \mathbb{R}_+^n \setminus \{0\} \to \Delta^{n-1}$ and $\sigma_C: \mathbb{R}^m \setminus \{0\} \to \Delta^{m-1}$ be continuous functions that map a non-zero vector to a probability vector. We call $\sigma_R$ and $\sigma_C$ \emph{normalization kernels}. Let $M \in \mathbb{R}_+^{m \times n}$. We define the \emph{generalized weight spaces} as follows:\begin{enumerate}\item The \emph{row-weight space} $\mathcal{W}$ is defined by the matrix $W$, where the $i$-th row is: $ W_{i, \cdot} = w^{(i)} = \sigma_R(r_i). $\item The \emph{column-weight space} $\mathcal{V}$ is defined by the matrix $V$, where the $j$-th row is: $ V_{j, \cdot} = v^{(j)} = \sigma_C(c_j).$\end{enumerate}\end{definition}

\begin{example}\label{ex:normalization_kernels}
    Examples of normalization kernels are $L^1$ normalization, softmax and squared $L^1$ normalization, which we use in our experiments.
\end{example}

We now have a natural choice of maps between the row-weight space and column-weight space using the \emph{Fréchet mean}.

\begin{definition}
     Let $\mathcal{A} \subset \R^n$ be a set of vectors.
    We denote by $\conv(\mathcal{A})$, the \emph{convex hull} of the vectors $\mathcal{A}$.
\end{definition}

\begin{definition}\label{def:barycenters}
Let $M \in \mathbb{R}_+^{m \times n}$ and let $\{w^{(i)}\}_{i \in \{1, \dots, m\}}$ and $\{v^{(j)}\}_{j \in \{1, \dots, n\}}$ be row and column weights induced by some normalization kernels $\sigma_R$ and $\sigma_C$. We have the barycenters
   \begin{align}
       \tilde\phi: \rR &\to \conv (\cC) \quad \text{defined by} \\ \quad 
   r_i &\mapsto \argmin_{\mu \in \conv(\cC)} \sum_{j = 1}^n w^{(i)}_j \| \mu - c_{j}\|_2^2
   \end{align} 
   and
    \begin{align} \tilde\psi: \cC &\to \conv (\rR) \quad \text{defined by} \\ \quad 
   c_{j} &\mapsto \argmin_{\mu \in \conv(\rR)} \sum_{i = 1}^m v^{(j)}_i \| \mu - r_{i}\|_2^2.
    \end{align} 
   We extend these maps linearly to $\phi: \conv(\rR) \to \conv (\cC)$ and $\psi: \conv(\cC) \to \conv (\rR)$.
\end{definition}

\begin{remark}
    We note that the barycenters in \cref{def:barycenters} are well-defined, linear and have the closed form 
    \[ \phi(r_i) = w^{(i)}M^T \quad \text{and} \quad \psi(c_j) = v^{(j)}M.\] 
\end{remark}

The barycentric maps give a natural way of going between the convex hull of the rows and the convex hull of the columns. The next definition, \emph{Fréchet variance}, will capture how stable these maps are. This allows us to define the notion of an $\epsilon$-local matrix, where $\epsilon$ is an upper bound to the variance for all rows and columns:

\begin{definition}\label{def:localness}
Let $M \in \mathbb{R}_+^{m \times n}$. We define the \emph{Fréchet variance} of a row $r_i$ of $M$ as 
$$\text{Var}_{\rR}(r_i)   \coloneqq \sum_{j = 1}^n w^{(i)}_j \| \phi(r_{i}) - c_{j}\|_2^2,$$
and the variance of a column $c_j$ of $M$ as
$$\text{Var}_{\cC}(c_{j}) \coloneqq \sum_{i = 1}^m v^{(j)}_i \| \psi(c_{j}) - r_{i}\|_2^2.$$

We say that $M$ is \emph{$\epsilon$-local} if for any row index $i$ and column index $j$ we have that 
\[
\text{Var}_{\rR}(r_{i}) \leq \epsilon \quad \text{and} \quad \text{Var}_{\cC}(c_{j}) \leq \epsilon.
\]
\end{definition}

The next thing we can ask is that, given a row (or column), does there exist a column (or row) that maps close to it under the barycentric map? This corresponds to asking whether, for a given sample, does there exist a feature that describes it well? Or dually, given a feature, does there exist a sample that needs it?

\begin{definition}\label{def:covering}
    Let $M \in \mathbb{R}_+^{m \times n}$. We define the \emph{covering} of a row $r_i$ of $M$ as 
$$\text{Cov}_{\rR}(r_i)   \coloneqq \sum_{j = 1}^n w^{(i)}_j \| r_{i} - \psi(c_{j})\|_2^2,$$
and the covering of a column $c_j$ of $M$ as
$$\text{Cov}_{\cC}(c_{j}) \coloneqq \sum_{i = 1}^m v^{(j)}_i \| c_{j} - \phi (r_{i})\|_2^2.$$

We say that $M$ is \emph{$\epsilon$-covered} if for any row index $i$ and column index $j$ we have that 
\[
\text{Cov}_{\rR}(r_{i}) \leq \epsilon \quad \text{and} \quad \text{Cov}_{\cC}(c_{j}) \leq \epsilon.
\]
\end{definition}

 We can now show that coherence implies a bound for the interleaving distance between the rows and columns of $M$. To have an interleaving, we need a matching between the set of rows and columns, as our barycentric maps map into the convex hull, not onto a particular row or column, we have to define a hard barycentric map, by snapping to the closest row or column.

\begin{definition}\label{def:snapping}
    Let $M \in \mathbb{R}_+^{m \times n}$.
    We define the \emph{snapping barycentric maps} as $\Phi : \rR \rightarrow \cC$ by 
    \[
    r_{i} \mapsto \argmin_{c_{j} \in \cC} \| \phi(r_{i}) - c_{j} \|_2
    \]
    and $\Psi : \cC \rightarrow \rR$
    \[
    c_{j} \mapsto \argmin_{r_{i} \in \rR} \|\psi(c_{j}) - r_{i} \|_2.
    \]
    We note that there might be several closest columns and rows, in that case one makes an arbitrary choice.
    
\end{definition}

Locality of a matrix $M$ gives a bound on how much the barycentric map and the barycentric snapping map can disagree. In practice this allows us to work with the differentiable barycentric maps, as we know how much is lost when we pass to the non-differentiable snapping maps that we use to make the matching.
\begin{proposition}\label{prop:snapping_distance}
     Let $M \in \mathbb{R}_+^{m \times n}$. If $M$ is $\epsilon$-local, then 
    \[
    \| \phi(r_{i}) - \Phi(r_{i}) \|_2 \leq \epsilon^{1/2} \quad \text{and} \quad  \| \psi(c_{j}) - \Psi(c_{j})\|_2 \leq \epsilon^{1/2}.
    \]
\end{proposition}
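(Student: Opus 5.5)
The plan is a weighted-average argument: a nonnegative-weighted average of squared distances is at least its minimum term. Fix a row index $i$. By Definition~\ref{def:general_weights}, the weight vector $w^{(i)} = \sigma_R(r_i)$ lies in the simplex $\Delta^{n-1}$, so $w^{(i)}_j \geq 0$ and $\sum_j w^{(i)}_j = 1$. By Definition~\ref{def:snapping}, $\Phi(r_i)$ is a column $c_{j^*}$ minimizing $\|\phi(r_i) - c_j\|_2$ over $j$. Hence for every $j$,
\[
\|\phi(r_i) - \Phi(r_i)\|_2^2 \leq \|\phi(r_i) - c_j\|_2^2 .
\]

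Next I multiply this inequality by $w^{(i)}_j$ and sum over $j$. Because the weights are nonnegative and sum to one, this gives
\[
\|\phi(r_i) - \Phi(r_i)\|_2^2 \leq \sum_{j=1}^n w^{(i)}_j \|\phi(r_i) - c_j\|_2^2 = \text{Var}_{\rR}(r_i) \leq \epsilon .
\]
The equality is Definition~\ref{def:localness}, and the final inequality is $\epsilon$-locality. Taking square roots gives the first claim. The choice of minimizer when there are ties is irrelevant, since any minimizer attains the same minimal distance.

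The column statement follows by the identical argument with roles swapped. Here I use $v^{(j)} = \sigma_C(c_j) \in \Delta^{m-1}$, the fact that $\Psi(c_j)$ minimizes $\|\psi(c_j) - r_i\|_2$ over $i$, and the bound $\text{Var}_{\cC}(c_j) \leq \epsilon$.

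I do not expect a real obstacle. The only point needing care is that the weights form a genuine probability vector, which is exactly what the definition of normalization kernels guarantees. The standing assumption that $M$ has no zero rows or columns ensures $\sigma_R(r_i)$ and $\sigma_C(c_j)$ are defined. No property of $\phi(r_i)$ beyond its being a fixed point of $\R^m$ (respectively $\R^n$) is used; in particular, its barycentric characterization plays no role.
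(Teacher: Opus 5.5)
Your proof is correct and follows essentially the same route as the paper's: bound $\|\phi(r_i)-\Phi(r_i)\|_2^2$ termwise by $\|\phi(r_i)-c_j\|_2^2$ using the nearest-column choice in $\Phi$, then average against the probability weights $w^{(i)}$ to get $\mathrm{Var}_{\mathcal{R}}(r_i)\leq\epsilon$. You also state the nonnegativity of the weights explicitly, which the paper uses only implicitly.
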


\textit{Proof sketch.} The variance bound implies that the weighted columns $c_j$ concentrate around $\phi(r_i)$. Since $\Phi(r_i)$ is one of these columns, it lies within the variance ball. Full proof in Appendix~\ref{app:proofs}. 

\begin{proposition}\label{prop:covered}
    If $M$ is $\epsilon$-covered, then for any row $r_i$ and column $c_j$ the roundabout trips are bounded by $\epsilon^{1/2}$:
    \[
    \|r_i - \psi \circ \phi (r_i)\|_2 \leq \epsilon^{1/2} \quad \text{and} \quad \|c_j - \phi \circ \psi (c_j)\|_2 \leq \epsilon^{1/2}.
    \]
\end{proposition}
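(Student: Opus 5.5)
The plan is to express the roundabout trip $\psi\circ\phi(r_i)$ as a convex combination of the points $\psi(c_j)$, and then use convexity of the squared norm to compare its distance to $r_i$ with the covering quantity $\text{Cov}_{\rR}(r_i)$.

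First, by the closed form in the remark following \cref{def:barycenters}, we have
\[
\phi(r_i) = w^{(i)}M^T = \sum_{j=1}^n w^{(i)}_j c_j .
\]
This is a convex combination of the columns, since $w^{(i)}\in\Delta^{n-1}$. The map $\psi$ is the linear (affine) extension of $\tilde\psi$ to $\conv(\cC)$, so it respects convex combinations. Hence
\[
\psi\circ\phi(r_i) = \sum_{j=1}^n w^{(i)}_j \psi(c_j).
\]
The one point to check here is that ``extend linearly'' really means affine on convex combinations of the $c_j$. Since the $c_j$ need not be affinely independent, I would note that the formula $\psi(x)=\sigma\text{-weighted}$ sum is defined through the coefficients. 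The cleanest reading is that $\psi(\sum_j\lambda_j c_j)=\sum_j\lambda_j\psi(c_j)$ for the given coefficients, and we apply it with $\lambda=w^{(i)}$.

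Second, since the weights $w^{(i)}_j$ sum to one, we can write
\[
r_i-\psi\circ\phi(r_i)=\sum_j w^{(i)}_j\bigl(r_i-\psi(c_j)\bigr).
\]
Jensen's inequality for the convex function $\|\cdot\|_2^2$ (equivalently, Cauchy--Schwarz with the probability weights) then gives
\[
\|r_i-\psi\circ\phi(r_i)\|_2^2\le \sum_j w^{(i)}_j\|r_i-\psi(c_j)\|_2^2=\text{Cov}_{\rR}(r_i)\le\epsilon .
\]
Taking square roots yields the first bound.

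The column statement is proved in exactly the same way, with the roles exchanged. We use $\psi(c_j)=\sum_i v^{(j)}_i r_i$, the convex-combination property of $\phi$, and Jensen's inequality against $\text{Cov}_{\cC}(c_j)$.

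The main, and essentially only, subtle step is justifying that $\psi$ commutes with the particular convex combination defining $\phi(r_i)$. Once that is in place, the rest is a one-line convexity estimate.
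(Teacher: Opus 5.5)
Your proposal is correct and follows essentially the same argument as the paper: use the linear extension of $\psi$ to write $\psi\circ\phi(r_i)=\sum_j w^{(i)}_j\psi(c_j)$, then apply Jensen's inequality to bound $\|r_i-\psi\circ\phi(r_i)\|_2^2$ by $\text{Cov}_{\rR}(r_i)\le\epsilon$, and argue dually for columns. Your caveat is fair and does not affect the argument here: the ``linear extension'' is only well defined when read through the given convex coefficients, since the $c_j$ may be affinely dependent and $\psi(c_j)$ is not linear in $c_j$. The paper's proof passes over this point silently.
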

\textit{Proof sketch.}
 By linearity, $\psi \circ \phi(r_i)$ is a weighted average of the barycenters $\psi(c_j)$. The covering bound ensures $r_i$ is close to each $\psi(c_j)$ in a weighted sense. Together with Jensen's inequality this implies that $r_i$ lies within $\epsilon^{1/2}$
 of their weighted average. Full proof in Appendix~\ref{app:proofs}.

\begin{definition}\label{def:coherent}
     Let $M \in \mathbb{R}_+^{m \times n}$. We say $M$ is \emph{$\epsilon$-coherent} if $M$ is $\epsilon$-local and $\epsilon$-covered.
\end{definition}

See \cref{fig:locality} for an illustration of a coherent matrix versus a non-coherent matrix.

\begin{theorem}\label{thm:interleaving}
    If $M \in \R^{m \times n}_+$ is $\epsilon$-coherent and the barycentric maps $\phi$ and $\psi$ are $1$-Lipschitz, then the barycentric snapping maps $\Phi$ and $\Psi$ induce an $ \epsilon^{1/2}$-interleaving between 
    $\VR(\rR, d_2)$ and $\VR(\cC, d_2)$.
\end{theorem}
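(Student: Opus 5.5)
The plan is to verify directly the two ingredients of an interleaving of filtrations as in Definition~\ref{def:interleaving}. First, the vertex maps $\Phi$ and $\Psi$ of Definition~\ref{def:snapping} should extend to simplicial maps $\VR_t(\rR)\to\VR_{t+\delta}(\cC)$ and $\VR_t(\cC)\to\VR_{t+\delta}(\rR)$ with $\delta=\epsilon^{1/2}$. Second, the composites $\Psi\circ\Phi$ and $\Phi\circ\Psi$ should be contiguous to the inclusions $\VR_t\hookrightarrow\VR_{t+2\delta}$, so that they agree with the inclusions at the level of homotopy and hence persistence modules. I will use the convention of Appendix~\ref{app:prel} in which a simplex lies in $\VR_t$ when all pairwise distances are at most $2t$. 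The diameter-type estimates below give additive losses of $2\epsilon^{1/2}$ and $4\epsilon^{1/2}$, which in this convention become the shifts $\epsilon^{1/2}$ and $2\epsilon^{1/2}$. If the paper's convention uses diameter $\le t$ instead, the same argument yields a $2\epsilon^{1/2}$-interleaving, and I would flag that constant.

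\textbf{Step 1 (simplicial maps).} Take $\sigma\in\VR_t(\rR)$ and rows $r_i,r_k\in\sigma$. By the triangle inequality,
\[
\|\Phi(r_i)-\Phi(r_k)\|_2 \le \|\Phi(r_i)-\phi(r_i)\|_2+\|\phi(r_i)-\phi(r_k)\|_2+\|\phi(r_k)-\Phi(r_k)\|_2 .
\]
The outer terms are at most $\epsilon^{1/2}$ by Proposition~\ref{prop:snapping_distance}, which uses only $\epsilon$-locality. The middle term is at most $\|r_i-r_k\|_2\le 2t$ because $\phi$ is $1$-Lipschitz. Hence $\Phi(\sigma)$ has diameter at most $2t+2\epsilon^{1/2}$, so $\Phi(\sigma)\in\VR_{t+\epsilon^{1/2}}(\cC)$. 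The symmetric argument with $\psi$ handles $\Psi$. Since these are vertex maps on Vietoris--Rips complexes, commutation with the inclusion maps of each filtration is automatic.

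\textbf{Step 2 (round trips).} This is the key estimate and the only place covering enters. For a row $r_i$, write $c_j=\Phi(r_i)$ and split
\[
\|\Psi\Phi(r_i)-r_i\|_2 \le \|\Psi(c_j)-\psi(c_j)\|_2+\|\psi(c_j)-\psi(\phi(r_i))\|_2+\|\psi\phi(r_i)-r_i\|_2 .
\]
The three terms are handled as follows:
\begin{itemize}
\item[(i)] The first term is at most $\epsilon^{1/2}$ by Proposition~\ref{prop:snapping_distance}.
\item[(ii)] For the second term, $\psi$ is defined on $\conv(\cC)$, which contains both $c_j$ and $\phi(r_i)$, and $\psi$ is $1$-Lipschitz. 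So this term is at most $\|\Phi(r_i)-\phi(r_i)\|_2\le\epsilon^{1/2}$, again by Proposition~\ref{prop:snapping_distance}.
\item[(iii)] The third term is at most $\epsilon^{1/2}$ by Proposition~\ref{prop:covered}.
\end{itemize}
Altogether $\|\Psi\Phi(r_i)-r_i\|_2\le 3\epsilon^{1/2}$, and the same bound holds for $\|\Phi\Psi(c_j)-c_j\|_2$.

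\textbf{Step 3 (contiguity).} For $\sigma\in\VR_t(\rR)$ I need $\sigma\cup\Psi\Phi(\sigma)\in\VR_{t+2\epsilon^{1/2}}(\rR)$, which requires all pairwise distances to be at most $2t+4\epsilon^{1/2}$. There are three kinds of pairs:
\begin{itemize}
\item[(a)] Two original vertices are within $2t$ of each other, since $\sigma\in\VR_t(\rR)$.
\item[(b)] For a mixed pair, Step 2 gives $\|r_i-\Psi\Phi(r_k)\|_2\le 2t+3\epsilon^{1/2}$.
\item[(c)] For two image vertices, Step 1 applied twice gives $\|\Psi\Phi(r_i)-\Psi\Phi(r_k)\|_2\le 2t+4\epsilon^{1/2}$.
\end{itemize}
Contiguous simplicial maps induce equal maps on homology, so this yields the interleaving on persistence modules. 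The column side is symmetric.

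The main obstacle is bookkeeping rather than any deep difficulty. One point is making sure the Lipschitz hypothesis is applied on the right domain: $\psi$ must be evaluated at the snapped column $\Phi(r_i)$ and not only at $\phi(r_i)$, which is why the linear extension to $\conv(\cC)$ is needed. The other is matching the constants $2\epsilon^{1/2}$ and $4\epsilon^{1/2}$ to the paper's Vietoris--Rips scale convention.
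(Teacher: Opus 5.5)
Your proposal is correct and is essentially the paper's proof. Step 1 is condition 1 of Lemma~\ref{lem:interleaving}, and your Step 2 bound $\|\Psi\Phi(r_i)-r_i\|_2\le 3\epsilon^{1/2}$ splits the error exactly as the paper does (snapping error of $\Psi$, the $1$-Lipschitz $\psi$ applied to the snapping error of $\Phi$, and Proposition~\ref{prop:covered}), with the same slack $3\epsilon^{1/2}\le 4\epsilon^{1/2}$. Your Step 3 just unpacks the contiguity argument that the paper packages into Lemma~\ref{lem:interleaving}, including the image--image pairs that the lemma handles implicitly by chaining conditions 1 and 2, and contiguity yields the required homotopy commutation via Lemma~\ref{lemma:contiguous_homotopy}.
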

\textit{Proof sketch.}
We verify the four conditions in \autoref{lem:interleaving}. We want to use the maps $\psi$ and $\phi$ between the rows and columns, but these land in the convex hulls rather than on actual rows or columns. Therefore, we must use the hard snapping maps $\Psi$ and $\Phi$ and pay the cost of $ \epsilon^{1/2}$ given by \autoref{prop:snapping_distance} and \autoref{prop:covered}.

\begin{remark}
    The result generalizes immediately to $K$-Lipschitz barycentric maps, yielding a $K\epsilon^{1/2}$-interleaving. We state the $K=1$ case for clarity since, in our experiments, these maps satisfy this assumption up to negligible violations.
\end{remark}

\section{Algorithm}\label{sec:alg}
See \autoref{sec:ap} for more details of implementation.

\begin{algorithm}
  \caption{\textsc{Coh}}
  \label{alg:coh}
  \begin{algorithmic}
    \STATE {\bfseries Input:} Non-negative matrix $M \in \mathbb{R}^{B \times L}_{\geq 0}$ (Batch $\times$ Latent), 
    top-$k$ parameters $k_R, k_C$, target variance $\tau$
    \STATE {\bfseries Output:} Coherence loss $\mathcal{L}_{\text{\textsc{Coh}}}$
    \STATE
    \STATE \textit{// Compute scale factors (sampled for efficiency)}
    \STATE $\bar{d}_R \leftarrow \text{MeanPairwiseDist}(\{r_i\})$
    \STATE $\bar{d}_C \leftarrow \text{MeanPairwiseDist}(\{c_j\})$
    \STATE
    \STATE \textit{// Compute weights (Squared L1 normalization)}
    \STATE $W_{ij} \leftarrow M_{ij}^2 / \sum_k M_{ik}^2$ \quad \textit{// Row weights}
    \STATE $V_{ji} \leftarrow M_{ij}^2 / \sum_k M_{kj}^2$ \quad \textit{// Column weights}
    \STATE
    \STATE \textit{// Compute barycenters}
    \STATE $\phi(r_i) \leftarrow W_{i,:} M^T$ for all $i$
    \STATE $\psi(c_j) \leftarrow V_{j,:} M$ for all $j$
    \STATE
    \STATE \textit{// Row variance and covering (normalized)}
    \FOR{$i=1$ {\bfseries to} $B$}
      \STATE $\text{Var}_R(r_i) \leftarrow \sum_j W_{ij} \|c_j - \phi(r_i)\|^2 / \bar{d}_C^2$
    \STATE $\text{Cov}_R(r_i) \leftarrow \sum_j W_{ij} \|r_i - \psi(c_j)\|^2 / \bar{d}_R^2$
    \ENDFOR
    \STATE
    \STATE \textit{// Column variance and covered (normalized)}
    \FOR{$j=1$ {\bfseries to} $L$}
      \STATE $\text{Var}_C(c_j) \leftarrow \left(\sum_i V_{ji} \|r_i - \psi(c_j)\|^2\right) / \bar{d}_R^2$
      \STATE $\text{Cov}_C(c_j) \leftarrow \left(\sum_i V_{ji} \|c_j - \phi(r_i)\|^2\right) / \bar{d}_C^2$
    \ENDFOR
    \STATE
    \STATE \textit{// Top-k aggregation with threshold}
    \STATE $\mathcal{L}_{\text{var}} \leftarrow \text{TopK}([\text{Var}_R - \tau]_+, k_R) + \text{TopK}([\text{Var}_C - \tau]_+, k_C)$
    \STATE $\mathcal{L}_{\text{cov}} \leftarrow \text{TopK}([\text{Cov}_R - \tau]_+, k_R) + \text{TopK}([\text{Cov}_C - \tau]_+, k_C)$
    \STATE
    \RETURN $\mathcal{L}_{\text{var}} + \mathcal{L}_{\text{cov}}$
  \end{algorithmic}
\end{algorithm}

\section{Auto-Encoder Experiments}
We evaluate \textsc{Coh} on synthetic and real datasets with known 
topological structure, measuring whether learned features 
align with interpretable data attributes. 

We compare with the plain auto-encoder without objective function and with $L^1$-regularization as the most canonical and simple way of getting interpretability. We use 
Ripser~\cite{bauer2021ripser} for persistent homology and 
UMAP~\cite{umap} for visualization.

\paragraph{Datasets}
For the toy data set we sample $20{,} 000$ points from the disjoint union of two circles embedded in $\R^{512}$. We train on half of the samples and create latent spaces from the other half.
We use rotated \textsc{MNIST}, sampling each digit at $72$ uniformly spaced angles with $250$ samples per angle. We use $90\%$ of the data for training and do all analysis on the latent space given by the latent test samples.
In the single digit experiment we are looking at the digit $6$ as it should be easy to have a circular latent space. For the two digit experiments, we pick $3$ and $7$ as those are dissimilar and without non-trivial symmetries under rotation.

\paragraph{Hyperparameters.}
We select $\lambda_{\text{\textsc{Coh}}} = 10^{-3}$ and $\lambda_{L^1} = 10^{-3}$ 
for both \textsc{MNIST} experiments, balancing reconstruction loss with 
regularization strength. For the toy example, we set 
$\lambda_{\text{\textsc{Coh}}} = 10^{-5}$ and $\lambda_{L^1} = 10^{-4}$ 
without systematic tuning. See \Cref{tab:hyperparam-sweep-combined} 
for parameter sweeps.

\subsection{Metrics for Interpretability}

To evaluate whether learned features are interpretable, we measure how well each feature's activation aligns with known structure in the data. We introduce three metrics that yield directly human-readable feature descriptions in experiments.

\paragraph{Component score.}
For data with $K$ discrete components/labels, we measure whether features 
concentrate on a single component/label:
\[
\compscore(c) \coloneqq \frac{K}{K-1}\left(\frac{\max_k \sum_{i \in \text{component } k} c_i}{\sum_i c_i} - \frac{1}{K}\right).
\]
Scores range from $0$ (uniform) to $1$ (within one single component). We report 
the fraction of features with $\compscore > 0.5$. (Pure)

\paragraph{MRL.}
When data has circular structure (e.g., rotations), an interpretable feature should activate on a coherent arc. We use the Mean Resultant Length (MRL):
\[
\MRL(c) \coloneqq \frac{\left\| \sum_i c_i e^{i\theta_i} \right\|}{\sum_i c_i},
\]
where $\theta_i$ is the angle associated with sample $i$. A score of $1$ indicates all activation at a single angle; a score near $0$ indicates activation spread uniformly around the circle. We report the fraction of features with $\text{MRL} > 0.5$, indicating meaningful angular selectivity.
(Tuned)

\paragraph{{MRL}$_{\mathbf{180}}$.}
Digits 3, 6 and 7 lack $180$ degree symmetry, yet networks often encode 
antipodal angles together. We compute MRL with doubled angles:
\[
\MRL_{180}(c) \coloneqq \frac{\left\| \sum_i c_i e^{2i\theta_i} \right\|}{\sum_i c_i},
\]
so features firing at $\theta$ and $\theta + \pi$ receive high 
scores rather than canceling. We report the fraction of features with $\text{MRL}_{180} > 0.5$, indicating meaningful angular selectivity.
($\text{Tuned}_{180}$)

In addition to these metrics we keep track of how sparse the features are. We say a feature is considered active if its activation exceeds $1\%$ of the maximum activation. We record the mean percentage of the active feature per sample.

\paragraph{Toy experiment}
The learned barycentric maps in the \textsc{Coh} model satisfy the 1-Lipschitz assumption (checked by sampling $1000 \times 256$ pairs), yielding a $(0.14)^{1/2}$-interleaving by \Cref{thm:interleaving}. See \cref{fig:exp1} for a visualization of the resulting latent spaces and \cref{tab:two-circles-toy} for the results from the experiment. All three models encode a seemingly similar latent sample space. When looking at the latent features, the story is different, and it is not clear how the feature space relates to the original data except for our \textsc{Coh} model. See \autoref{fig:SPHERE} and \autoref{fig:TORUS} for similar results on other spaces.

\begin{figure*}[!htbp]
  \vskip 0.2in
  \begin{center}
    \centerline{\includegraphics[width=0.80\textwidth]{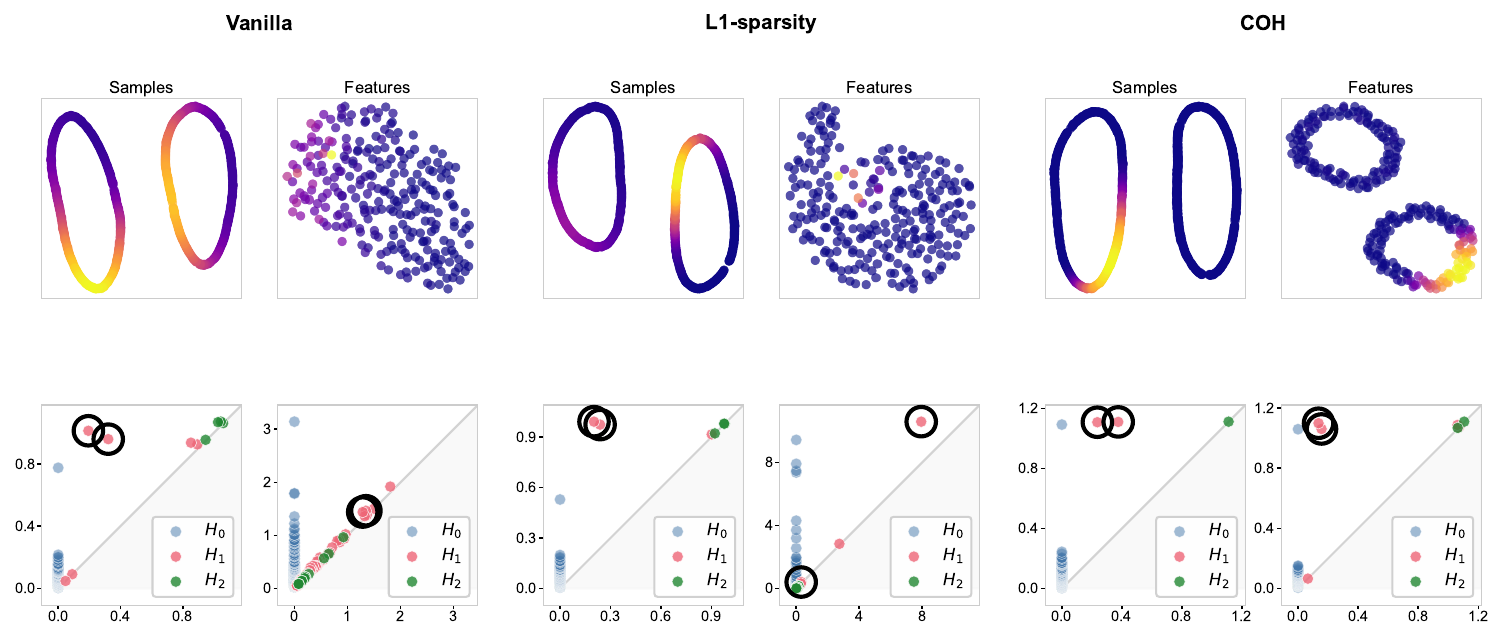}}
    \caption{\textbf{Two Circles Toy experiment.}
  UMAP projections of latent samples and latent features , with persistence diagrams for each. We highlight 
  the two most persistent $H_1$ features, representing the two 
  circles. Samples are colored by activation of a single feature; 
  features are colored by activation on a single sample. Only 
  the \textsc{Coh} model gives the expected disjoint circular topology in both spaces.}
    \label{fig:exp1}
  \end{center}
\end{figure*}

\begin{table*}[!htbp]
  \caption{\textbf{Two Circles Toy Experiment.}
  We showcase the results from a single run. \textsc{Coh} achieves 100\% tuned features and 90\% purity with 
  $\varepsilon = 0.14$ coherence.}
  \label{tab:two-circles-toy}
  \begin{center}
   
    \begin{footnotesize} 
      \begin{sc}
        \setlength{\tabcolsep}{4pt} 
        \begin{tabular}{lcccccccccc}
          \toprule
          Model & MSE & Sparse\% & MRL & \%Tuned & MRL$_{180}$ & \%Tun$_{180}$ & Purity & \%Pure & Locality & covered \\
          \midrule
          Vanilla & 0.0000996  & 83.3\% & 0.44 & 43.0\% & 0.14 & 0.0\% & 0.17 & 0.0\% & 0.53 & 0.44 \\
          L1      & 0.0000995 & 28.8\% & 0.50 & 52.0\% & 0.22 & 0.0\% & 0.31 & 1.2\% & 4.62 & 4.58 \\
          \textsc{Coh}     & 0.0000994 & 73.9\% & 0.74 & 100.0\% & 0.36 & 0.0\% & 0.85 & 90.2\% & 0.14 & 0.14 \\
          \bottomrule
        \end{tabular}
      \end{sc}
    \end{footnotesize}
  \end{center}
  \vskip -0.1in
\end{table*}

\paragraph{Single digit experiment} With a single rotated class (6) of digits we observe in \cref{tab:one-circle} that \textsc{Coh} has all features being tuned to the angles of the data, while being much less sparse than the features for $L^1$. There also is very little cost to this improvement in terms of MSE. In \cref{fig:singel_digit_pers_umap} we see the duality of the circular space and in \cref{fig:singel_digit_avg} we observe that features and samples define activation on each others space. The average samples plotted over a feature clearly shows us the rotated expected digits. In \cref{fig:tuning_curves_simple} we decode the persistent $H_1$'s against the true angles confirming that the found circles are given by the angles generating the data.
We justify the $1$-Lipschitz assumption by sampling $1000 \times 256$ 
 pairs for $\phi$ and $\psi$ across five seeds on the \textsc{Coh} latent space, here we find that  $\psi$
satisfies the condition nearly exactly (violation rate $<0.002\%$) and  $\phi$
 violates it on $4.2\pm0.9\%$ of pairs with mean expansion 
$1.045$, hence the interleaving is almost a $0.15^{1/2}$-interleaving. The high values for $\text{MRL}_{180}$ in the \textsc{Coh} latent space are due to wide fields. 

\begin{table*}[tb]
  \caption{\textbf{Single Digit Experiment} 
  Results averaged over 5 random seeds. 
  \textsc{Coh} achieves 100\% tuned features (MRL $> 0.5$) compared to 
  $63\%$ for L1, with lower variance across seeds.}
 \label{tab:one-circle}
   \begin{center}
    \setlength{\tabcolsep}{4pt}
    \begin{footnotesize}
      \begin{sc}
        \begin{tabular}{lcccccccc}
          \toprule
          Model & MSE & Sp\% & MRL & \%Tun & MRL$_{180}$ & \%T$_{180}$  & Loc & Cov \\
          \midrule
          Vanilla & 0.012$\pm$.000  & 53.3$\pm$7.9 & 0.46$\pm$.06 & 43.8$\pm$10.7 & 0.29$\pm$.05 & 13.2$\pm$8.1  & 1.11$\pm$.21 & 0.87$\pm$.13 \\
          L1      & 0.010$\pm$.000 & 19.5$\pm$2.1 & 0.58$\pm$.01 & 63.3$\pm$2.8 & 0.43$\pm$.02 & 37.7$\pm$4.9 &  3.67$\pm$1.45 & 2.84$\pm$1.46 \\
          \textsc{Coh}     & 0.011$\pm$.000 & 33.2$\pm$0.6 & 0.88$\pm$.00 & 100.0$\pm$0.0 & 0.63$\pm$.01 & 100.0$\pm$0.0  & 0.15$\pm$.00 & 0.15$\pm$.00 \\
          \bottomrule
        \end{tabular}
      \end{sc}
    \end{footnotesize}
  \end{center}
  \vskip -0.1in
\end{table*}

\begin{figure*}[!htbp]
  \vskip 0.2in
  \begin{center}
    \centerline{\includegraphics[width=0.80\textwidth]{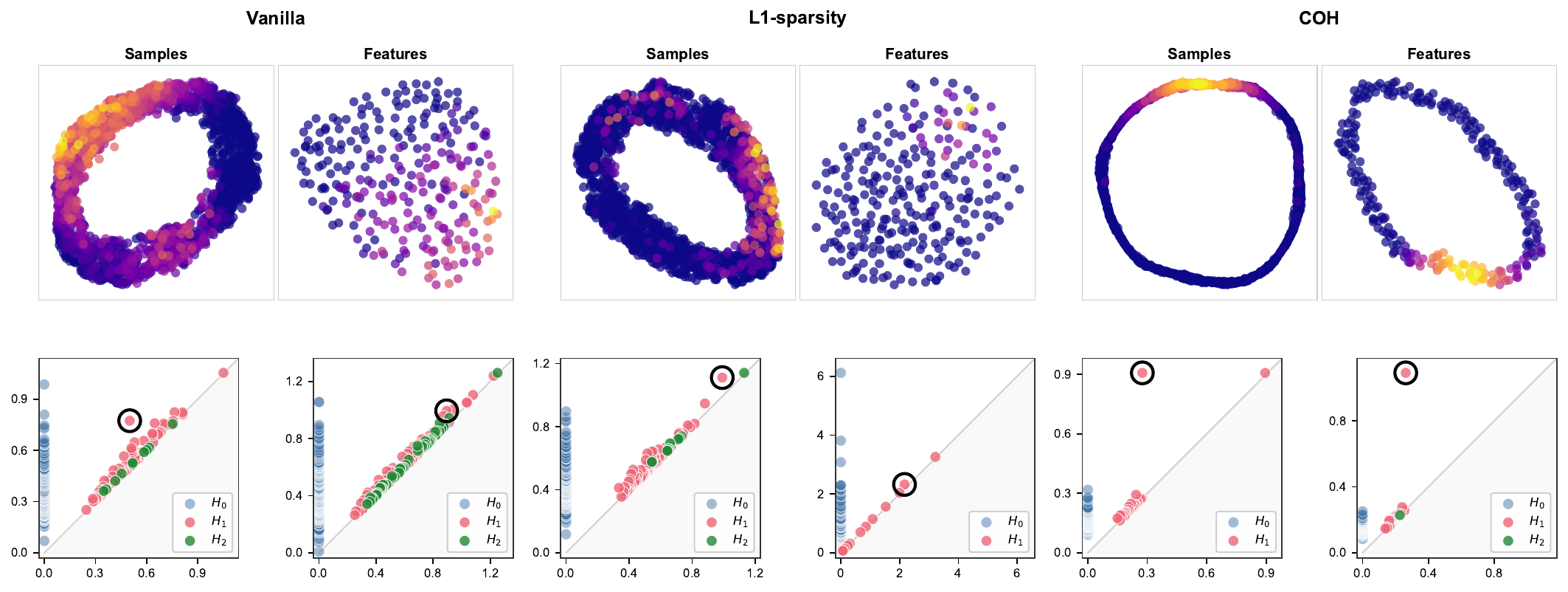}}
    \caption{\textbf{Single Digit Experiment.} For a representative seed we show the 
  UMAP projections of latent samples and latent features, with persistence diagrams for each. We highlight 
  the single most persistent $H_1$ features, representing the expected  
  circle. Samples are colored by activation of a single feature and 
  features are colored by activation on a single sample. Only 
  the \textsc{Coh} model gives the expected circular topology in both spaces.}
    \label{fig:singel_digit_pers_umap}
  \end{center}
\end{figure*}

\paragraph{Double digits experiment}

This setting is more challenging as the latent space must represent two classes $\times
72$ angles. Our model shows high variance across seeds (\cref{tab:two-digit-rotated}) due to two qualitatively different solutions. Crucially, \textsc{Coh} achieves what it is designed to do. The coherence metrics (Loc and Cov) are stable across all seeds with negligible variance. The algorithm reliably produces coherent latent spaces. The high variance in MRL and Purity does not reflect failure of the method. It reflects the fact that coherence admits multiple valid solutions.
From \cref{tab:two-digit-rotated} one can see that \textsc{Coh} beats the other models in all interpretability metrics with the exception of MRL and $\%$TUN score where $L^1$-sparsity wins. This is most likely due to sparseness. The high standard deviation for MRL, $\MRL_{180}$ and Purity comes from the existence of two types of solutions. In some seeds the two digit classes stay separated. In others they merge into a single circle, losing labeling information but with excellent angular tuning. Both solutions are coherent. Both have interpretable features. They simply organize the data differently. In \cref{fig:merged}, \cref{fig:double_digit_avg_5} and \cref{fig:double_digit_avg_8} we plot the UMAP embeddings, persistence diagrams and average samples over features for two of these different solutions. In \cref{fig:merged} one should pay attention to the persistent $H_0$
feature being present in the top latent space but not the bottom. This confirms the visual difference from UMAP. The top solution embeds the two digits in separate components (\cref{fig:tuning_curves_extended}) with angles looped around twice. The bottom solution merges the two circles (\cref{fig:tuning_curves_extended_5}) with no double loop. Both latent spaces are coherent as the UMAP projections and persistence diagrams confirm. The features in both cases attend to localized regions of the data manifold as intended. 

To guide \textsc{Coh} toward a specific coherent solution, we combine it with $L^1$ sparsity ($\lambda_{L^1} = 2\times10^{-2}$) and $\lambda_{\text{\textsc{Coh}}} = 10^{-3}$. This biases optimization toward a disentangled solution, being the sparsest coherent representation, yielding reliable class separation and angle tuning across seeds while preserving coherence guarantees (see \cref{fig:cohl11}, \cref{fig:cohl12} and \cref{fig:coh13}).

We again justify the 1-Lipschitz assumption by sampling $1000 \times 256$ pairs across ten seeds: $\psi$ satisfies the condition nearly exactly (violation rate 
$<0.2\%$ in $9/10$ seeds and last seed yields $1.46\%$); $\phi$ violates it on $3.4 \pm 2.0$\% of pairs with mean expansion $1.054 \pm 0.008$.

\section{Token Embedding Experiment}\label{Bert}
In Large Language Models, words, parts of words and symbols are represented as token embeddings, that are learned during training. While these embeddings are typically signed, they can be made non-negative for the sake of interpretability. This is achieved through a non-negative activation function such as \textsc{Softplus} and guarantees a non-negative matrix $M$. It is well known that the token embedding space carries a meaningful geometry: e.g., \say{mother} is close to \say{father} and \say{March} is close to \say{February}. Features, on the other hand, are often poly-semantic, and their space carries no clear geometry. We show that applying \textsc{Coh} to the token embeddings yields interpretable features with a meaningful geometry. We do not claim, or hope, that \emph{all} features become interpretable, as some are used to \say{merge} concepts. To demonstrate that the method works in this setting, we train a small \textsc{BERT} model \cite{devlin-etal-2019-bert} on the \textsc{WikiText-2} dataset \cite{merity2017pointer}.

\paragraph{Model.}
We use a compact \textsc{BERT}-style encoder with a token embedding of dimension $256$, $2$ transformer blocks, and $4$ transformer heads, trained on word-level \textsc{WikiText-2} with a vocabulary capped at the $2{,}000$ most frequent tokens and a sequence length of $128$. Inputs are tokenized at the word level (lowercased, with punctuation split off as separate tokens) rather than the more common subwords. This makes it easier to tell whether a feature is interpretable. Training follows the standard masked-language-modeling objective with a $15\%$ masking rate and a label smoothing of $0.1$, optimized with Adam using a learning rate of $3\cdot10^{-4}$ and a batch size of $128$. We train for $300$ epochs under a constant learning rate. The only non-standard choice is the token embedding, which we parametrize as a non-negative matrix using \textsc{Softplus} with $\beta = 20$.

We train three models with identical hyperparameters: one with the Vanilla (signed) token embedding, one with \textsc{Softplus}, and one with \textsc{Softplus} and \textsc{Coh}. See \autoref{app:BERT} for further details on the \textsc{Coh} model.

\paragraph{Scoring interpretability and results.}
We evaluate feature interpretability using three complementary methods. As a baseline, we treat the token geometry of the Vanilla model's embeddings as ground truth, since all models exhibit well-structured token geometry.

For each feature, we examine its top $20$ activating tokens and identify, within the Vanilla embedding, the token whose $20$ nearest neighbors best match that activation set; we record this as \textbf{Mean Overlap}. Since overlap varies considerably across features, we additionally report the number of features achieving greater than $50\%$ overlap, which we record as \textbf{Overlap $> .5$}. See \autoref{feature_stats} and \autoref{FEATURE_OVERLAP} for results.

We feed the top $10$ activating tokens per feature into Claude Opus 4.7 \cite{anthropic2025claude47}, prompting it to make a binary judgment of whether the feature is interpretable, i.e., whether all $10$ tokens share a common category; we record this as \textbf{Claude scoring}. See \autoref{tab:interpdict1} for all interpretable features from the first seed and an explanation from Claude. 

See \autoref{tab:INT_BERT} for a brief summary and \autoref{app:BERT} for more in-depth results. Model performance is negligibly impacted in our experiments; see \autoref{tab:results_BERT}.

\begin{table}[h]
\centering
\begin{tabular}{lcc}
\hline
\textbf{Metric} & \textbf{Coh} & \textbf{Softplus} \\
\hline
Mean Overlap  & $0.45 \pm 0.01$ & $0.22 \pm 0.00 $  \\
Overlap $>$ .5 & $77.4 \pm 3.3/256$ &  $1.0 \pm 0.6/256$\\
Claude Scoring & $87.6 \pm 10.4 /256$ & $0.0 \pm 0.0 /256$ \\
\hline
\end{tabular}
\caption{Mean and standard deviation for interpretability scores across $5$ seeds. }
\label{tab:INT_BERT}
\end{table}

As with the auto-encoders, coherence yields not only more interpretable features but a more interpretable feature \emph{space}; see \autoref{COMPARING_BERT_ACTIVATIONS_final} and \autoref{nearby_features}.

\section{Discussion}\label{sec:discussion}

Many approaches to interpretability focus on individual features. We instead focus on the interpretability of the feature space as a whole. We have given a rigorous framework for coupling latent samples and latent features so that the two share compatible topology, and we have shown that our objective reliably produces coherent representations: locality and covering are stable across seeds, and features inherit meaning from the samples they attend to.

The two settings validate this in complementary ways. In the autoencoders, where the data has known topology, coherence lets us verify the geometric guarantee directly, recovering the expected circles in both the sample and feature spaces. In the token embeddings no ground-truth topology exists; coherence instead transfers the embedding's known semantic geometry to the feature space. That the \textsc{Softplus}-only baseline yields almost no interpretable features (on average $87.6$ of $256$ for \textsc{Coh} versus none) shows that non-negativity alone is not enough, and the full feature listing lets the reader audit every judgment directly.

Our objective is simple and only requires a non-negative activation function, yet it carries across architecturally distinct setups. As a limitation, the Lipschitz assumption on the barycentric maps is verified empirically rather than enforced. A further observation, from the Double Digit Experiment, is that the same task can admit quite different coherent latent spaces, and our method does not control which one results. As \textsc{Coh}~$+\,L^1$ demonstrates, coherence is complementary to other objectives: sparsity can select among coherent solutions without sacrificing the geometric guarantee.

For future work, we would like to scale to larger networks and datasets, apply coherence to tasks such as classification and to multiple layers rather than only the bottleneck or embedding layer, and explore disentanglement by applying the objective on blocks of the latent space or across multi-head architectures.

\section*{Acknowledgments}
We want to thank the anonymous reviewers, Chad Giusti and Daniela Egas Santander for constructive feedback. The work was supported by a grant from the Research Council of Norway (iMOD, NFR grant 325114); a Centre of Excellence grant (Centre for Algorithms in the Cortex, grant 332640) from the Research Council of Norway; and the Department of Mathematics at NTNU.

\section*{Impact Statement}
This paper presents work whose goal is to advance the field of Machine Learning, specifically interpretability of learned representations. We do not foresee direct negative societal consequences from this foundational research. If anything, improved interpretability may contribute positively to AI safety and transparency.

\FloatBarrier

\bibliographystyle{icml2026}
\bibliography{references}  

\onecolumn

\appendix

\section{Auto Encoder: Additional Plots and Tables.}\label{sec:ap}

\paragraph{Algorithm}

Algorithm~\ref{alg:coh} presents the idealized coherence computation. 
In practice, several modifications are necessary for stable training:

\paragraph{Scale normalization.} 
When $M \in \mathbb{R}^{N \times L}$ with $N \neq L$, row and column 
distances live at different scales. We normalize by the mean pairwise 
distance (approximated by sampling) within each space:
\[
\bar{d}_R = \frac{1}{\binom{N}{2}} \sum_{i < j} \|r_i - r_j\|_2, \quad
\bar{d}_C = \frac{1}{\binom{L}{2}} \sum_{j < k} \|c_j - c_k\|_2.
\]
The normalized variances become $\text{Var}_R(r_i) / \bar{d}_C^2$ and 
$\text{Var}_C(c_j) / \bar{d}_R^2$ and the normalized coverings become $\Cov_R(r_i)/\bar{d}_R^2$ and $\Cov_C(c_j)/\bar{d}_C^2$.

\paragraph{Top-$k$ aggregation.}
Rather than penalizing mean variance, we penalize the $k$ worst offenders, making the loss robust to outliers, moreover the $\epsilon$-coherence is based on the worst offender:
\[
\mathcal{L}_{\text{var}} = \frac{1}{k}\sum_{i \in \text{top-}k} \text{Var}_R(r_i) 
+ \frac{1}{k}\sum_{j \in \text{top-}k} \text{Var}_C(c_j).
\]

\paragraph{Target variance/covered threshold.}
Perfect coherence ($\epsilon = 0$) is unnecessarily restrictive, only possible for a matrix of constant orthogonal blocks. 
We thus only penalize variance exceeding a target parameter $\tau$:
\[
[\text{Var}(x) - \tau]_+ = \max(0, \text{Var}(x) - \tau)
\]
and
\[
[\text{Cov}(x) - \tau]_+ = \max(0, \text{Cov}(x) - \tau).
\]

 \paragraph{Normalization kernel.}
  We choose squared $L^1$ normalization as our normalization kernel, that is
  \[
  w^{(i)}_j = \frac{M_{ij}^2}{\sum_k M_{ik}^2} \quad \text{and} \quad v^{(j)}_i = \frac{M_{ij}^2}{\sum_k M_{kj}^2}.
  \] This concentrates weight on dominant activations without introducing a temperature hyperparameter.

 See \cref{alg:coh} for pseudocode. Variance simplifies via $\Var(X) = \mathbb{E}[X^2] - \mathbb{E}[X]^2$:
\[
\text{Var}_R(r_i) = \sum_j W_{ij} \|c_j\|^2 - \|\phi(r_i)\|^2.
\]
Covering terms expand similarly. Complexity is $\mathcal{O}(B^2L + BL^2)$ 
per batch, dominated by barycenter computation.

\begin{table}[!htbp]
  \caption{\textbf{Hyperparameter sweep on rotated \textsc{MNIST}.} 
  Sp.\% = mean active features per sample (active if $>1\%$ of max). 
  T\% = fraction of features with MRL $> 0.5$. 
  P\% = fraction of features with component score $> 0.5$. 
  Bold rows indicate selected hyperparameters. The lower MSE for the two digit experiment in general is due to doubling the sample set.}
  \label{tab:hyperparam-sweep-combined}
  \vspace{-0.5em}
  \begin{center}
    \begin{scriptsize}
      \setlength{\tabcolsep}{2pt}
      \begin{sc}
        \begin{tabular}{@{}llcccc|llcccccc@{}}
          \toprule
          \multicolumn{6}{c|}{\textbf{Single Digit}} & \multicolumn{8}{c}{\textbf{Two Digits}} \\
          Method & $\lambda$ & MSE & Sp.\% & MRL & \%T & Method & $\lambda$ & MSE & Sp.\% & MRL & MRL$_{180}$ & Pur. & \%P \\
          \midrule
          Van. & -- & .011 & 38 & .55 & 58 & Van. & -- & .009 & 44 & .38 & .38 & .40 & 34 \\
          \midrule
          \textsc{Coh} & $10^{-5}$ & .011 & 46 & .48 & 49 & \textsc{Coh} & $10^{-5}$ & .008 & 53 & .34 & .34 & .35 & 27 \\
          \textsc{Coh} & $10^{-4}$ & .011 & 51 & .58 & 69 & \textsc{Coh} & $10^{-4}$ & .008 & 56 & .39 & .41 & .32 & 20 \\
          \textsc{Coh} & $5{\times}10^{-4}$ & .012 & 36 & .84 & 100 & \textsc{Coh}& $5{\times}10^{-4}$ & .009 & 41 & .47 & .62 & .61 & 79 \\
          \textbf{\textsc{Coh}} & $\mathbf{10^{-3}}$ & \textbf{.012} & \textbf{33} & \textbf{.88} & \textbf{100} & \textbf{\textsc{Coh}} & $\mathbf{10^{-3}}$ & \textbf{.009} & \textbf{41} & \textbf{.24} & \textbf{.51} & \textbf{.88} & \textbf{97} \\
          \textsc{Coh} & $5{\times}10^{-3}$ & .011 & 32 & .91 & 100 & \textsc{Coh} & $5{\times}10^{-3}$ & .009 & 40 & .15 & .74 & .10 & 0 \\
          \textsc{Coh} & $10^{-2}$ & .012 & 30 & .92 & 100 & \textsc{Coh} & $10^{-2}$ & .009 & 38 & .12 & .76 & .10 & 0 \\
          \midrule
          L1 & $10^{-5}$ & .012 & 48 & .50 & 49 & L1 & $10^{-5}$ & .009 & 48 & .34 & .35 & .37 & 30 \\
          L1 & $10^{-4}$ & .011 & 36 & .58 & 66 & L1 & $10^{-4}$ & .009 & 44 & .35 & .36 & .34 & 28 \\
          L1 & $5{\times}10^{-4}$ & .010 & 27 & .60 & 68 & L1 & $5{\times}10^{-4}$ & .008 & 34 & .39 & .37 & .33 & 23 \\
          \textbf{L1} & $\mathbf{10^{-3}}$ & \textbf{.010} & \textbf{23} & \textbf{.56} & \textbf{60} & \textbf{L1} & $\mathbf{10^{-3}}$ & \textbf{.007} & \textbf{29} & \textbf{.42} & \textbf{.37} & \textbf{.37} & \textbf{31} \\
          L1 & $5{\times}10^{-3}$ & .011 & 10 & .54 & 60 & L1 & $5{\times}10^{-3}$ & .008 & 18 & .41 & .36 & .35 & 27 \\
          L1 & $10^{-2}$ & .013 & 7 & .51 & 43 & L1 & $10^{-2}$ & .012 & 9 & .46 & .42 & .54 & 57 \\
          \bottomrule
        \end{tabular}
      \end{sc}
    \end{scriptsize}
  \end{center}
  \vspace{-1em}
\end{table}

\begin{table*}[!htbp]
\caption{\textbf{Double Digit Experiment.} 
  Results averaged over 10 random seeds. 
  Both digits lack $180$ degree symmetry but high MRL$_{180}$ scores reflect the tendency of networks to collapse antipodal angles. \textsc{Coh} achieves better angular tuning (180), with almost $2$ times higher 
  purity and at modest reconstruction cost. We do however have high variance over interpretability metrics over seeds, however, \textsc{Coh} achieves consistent coherence (Loc, Cov) across all seeds. Variance in purity reflects distinct but equally coherent solutions (see  \cref{fig:merged}, \cref{fig:double_digit_avg_8}, \cref{fig:double_digit_avg_5}). \textsc{Coh} + L1 combines both objective functions, guides \textsc{Coh} towards a sparse solution and reliably achieves class separation and angle tuning while maintaining coherence, at the cost of higher MSE. 
  }
  \label{tab:two-digit-rotated}
   \begin{center}
    \setlength{\tabcolsep}{4pt}
    \begin{footnotesize}
      \begin{sc}
        \begin{tabular}{lcccccccccc}
          \toprule
          Model & MSE & Sp\% & MRL & \%Tun & MRL$_{180}$ & \%T$_{180}$ & Pur & \%Pur & Loc & Cov \\
          \midrule
          Vanilla & 0.009$\pm$.000 & 54.8$\pm$3.1 & 0.33$\pm$.01 & 19.6$\pm$2.1 & 0.32$\pm$.02 & 17.0$\pm$3.9 & 0.33$\pm$.03 & 6.2$\pm$2.3 & 0.64$\pm$.04 & 0.53$\pm$.02 \\
          L1      & 0.008$\pm$.000 & 28.0$\pm$3.0 & 0.42$\pm$.02 & 35.3$\pm$4.7 & 0.38$\pm$.01 & 29.3$\pm$3.5 & 0.37$\pm$.02 & 8.8$\pm$0.9 & 1.08$\pm$.15 & 0.74$\pm$.09 \\
          \textsc{Coh}     & 0.009$\pm$.000 & 38.6$\pm$1.6 & 0.34$\pm$.17 & 26.2$\pm$26.7 & 0.65$\pm$.06 & 88.4$\pm$11.2 & 0.61$\pm$.27 & 44.5$\pm$36.0 & 0.16$\pm$.01 & 0.16$\pm$.01 \\
          \textsc{Coh} + L1 & 0.014$\pm$.000 & 17.8$\pm$1.8 & 0.85$\pm$.01 & 99.0$\pm$0.9 & 0.70$\pm$.01 & 98.7$\pm$0.9 & 0.90$\pm$.01 & 91.9$\pm$2.1 & 0.15$\pm$.00 & 0.15$\pm$.00 \\
          \bottomrule
        \end{tabular}
      \end{sc}
    \end{footnotesize}
  \end{center}
  \vskip -0.1in
\end{table*}

\paragraph{Hyperparameter selection.}
For the single-digit experiment, \textsc{Coh} performs well across a range 
of $\lambda$ values while maintaining stable MSE. Although L1 
achieves slightly lower reconstruction error, \textsc{Coh} produces 
better-tuned features even with less sparsity. We select 
$\lambda_{\text{\textsc{Coh}}} = \lambda_{L^1} = 10^{-3}$.

For the two-digit experiment, we observe angular tuning modulo 
$180$ degrees rather than full $360$ degrees tuning. As $\lambda_{\text{\textsc{Coh}}}$ 
increases, MRL$_{180}$ improves, but purity drops sharply at 
$\lambda = 5 \times 10^{-3}$ where the two latent circles merge, 
collapsing class information while preserving angular structure. 
We select 
$\lambda_{\text{\textsc{Coh}}} = \lambda_{L^1} = 10^{-3}$.

\paragraph{Decoding angles.}
Using circular coordinates~\cite{deSilva2011} (implementation 
from~\cite{DREiMac}), we extract angles from the most persistent 
$H_1$ class in both the latent sample and feature spaces. 
\Cref{fig:tuning_curves_simple,fig:tuning_curves_extended} show 
these recovered angles plotted against the true generating angles. 
To transfer angles from features to samples, we assign each sample 
the angle of its maximally-activating feature. The strong 
agreement confirms that the circle discovered in feature space 
corresponds to the circle in sample space, demonstrating how 
coherent features enable direct readout of latent structure.

\begin{table}[h]
  \caption{\textbf{Hyperparameters used in all experiments.} We choose non-negative activation function, Softplus, with a high $\beta$ value to encourage sparsity while mitigating the 'dead neuron' issue typically associated with ReLU. We use AdamW optimizer with $lr=10^{-3}$, weight decay $10^{-5}$, cosine annealing schedule $(\text{eta}_\text{min}=lr/10)$, and gradient clipping at norm $1.0$.}
  \label{tab:hyperparams}
  \begin{center}
    \begin{small}
      \begin{tabular}{ll}
        \toprule
        Hyperparameter & Value \\
        \midrule
        Hidden dim & 512 \\
        Latent dim & 256 \\
        Encoder/Decoder layers & 5/5 \\
        Activation hidden & GELU \\
        Activation latent & Softplus($\beta=20$) \\
        Learning rate & $10^{-3}$ \\
        Batch size & 1024 \\
        Epochs & 300 \\
        $\lambda_{\text{\textsc{Coh}}}$ & $10^{-3}$ \\
        $\lambda_{\text{L1}}$ & $10^{-3}$ \\
        Target variance/covering ($\tau$) & 0.1 \\
        Top-$k$ (rows/cols) & 15/15 \\
        \bottomrule
      \end{tabular}
    \end{small}
  \end{center}
\end{table}

\begin{figure*}[t]
  \vskip 0.2in
  \begin{center}
    \centerline{\includegraphics[scale=0.5]{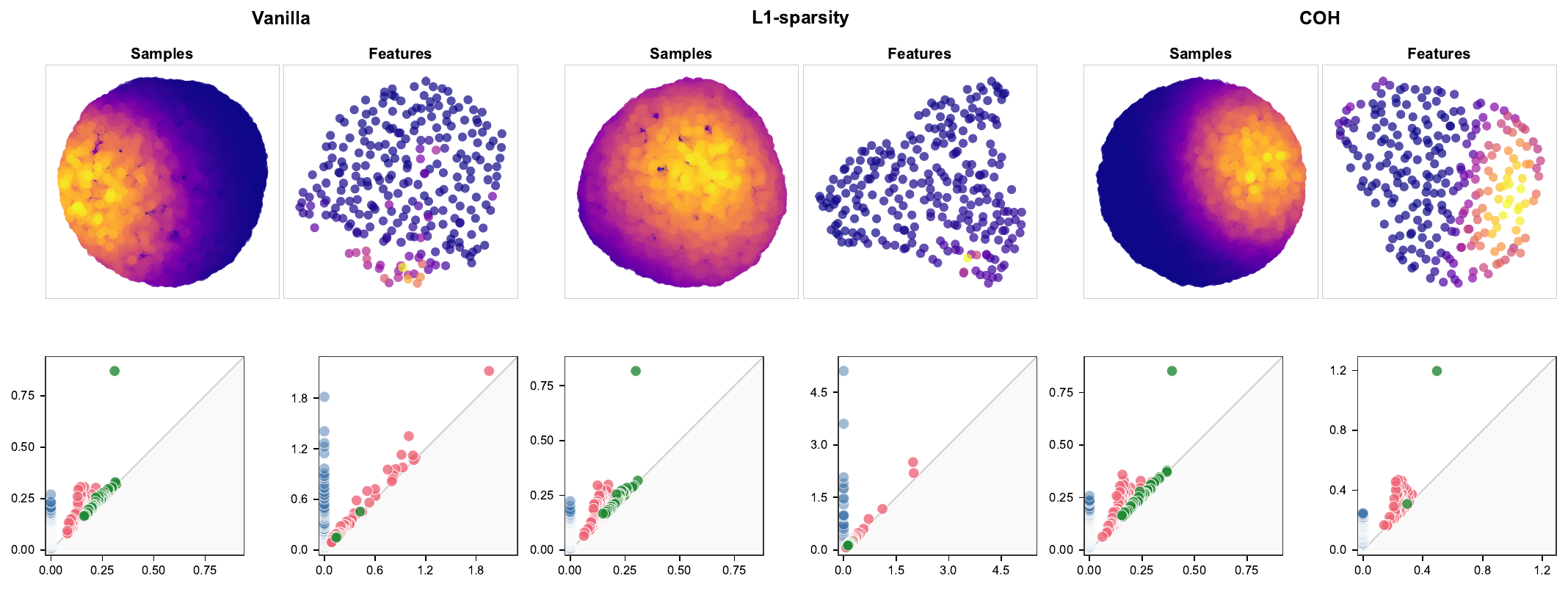}}
    \caption{
   \textbf{Toy experiment: sphere.}
   We replicate the two circle toy experiment with a sphere.
    }
    \label{fig:SPHERE}
  \end{center}
\end{figure*}

\begin{figure*}[t]
  \vskip 0.2in
  \begin{center}
    \centerline{\includegraphics[scale=0.5]{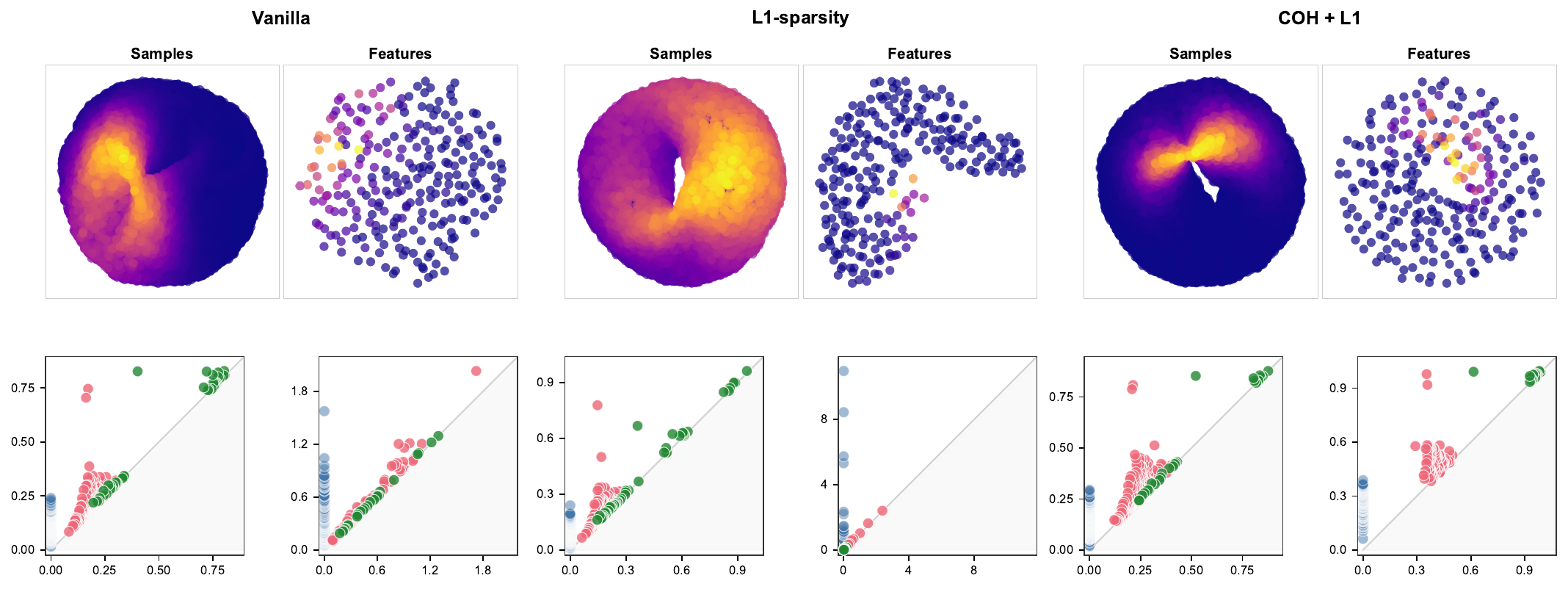}}
    \caption{
   \textbf{Toy experiment: torus.}
   We replicate the two circle toy experiment with a torus.
   }
    \label{fig:TORUS}
  \end{center}
\end{figure*}

\begin{figure*}[!htbp]
  \vskip 0.2in
  \begin{center}
    \centerline{\includegraphics[width=0.80\textwidth]{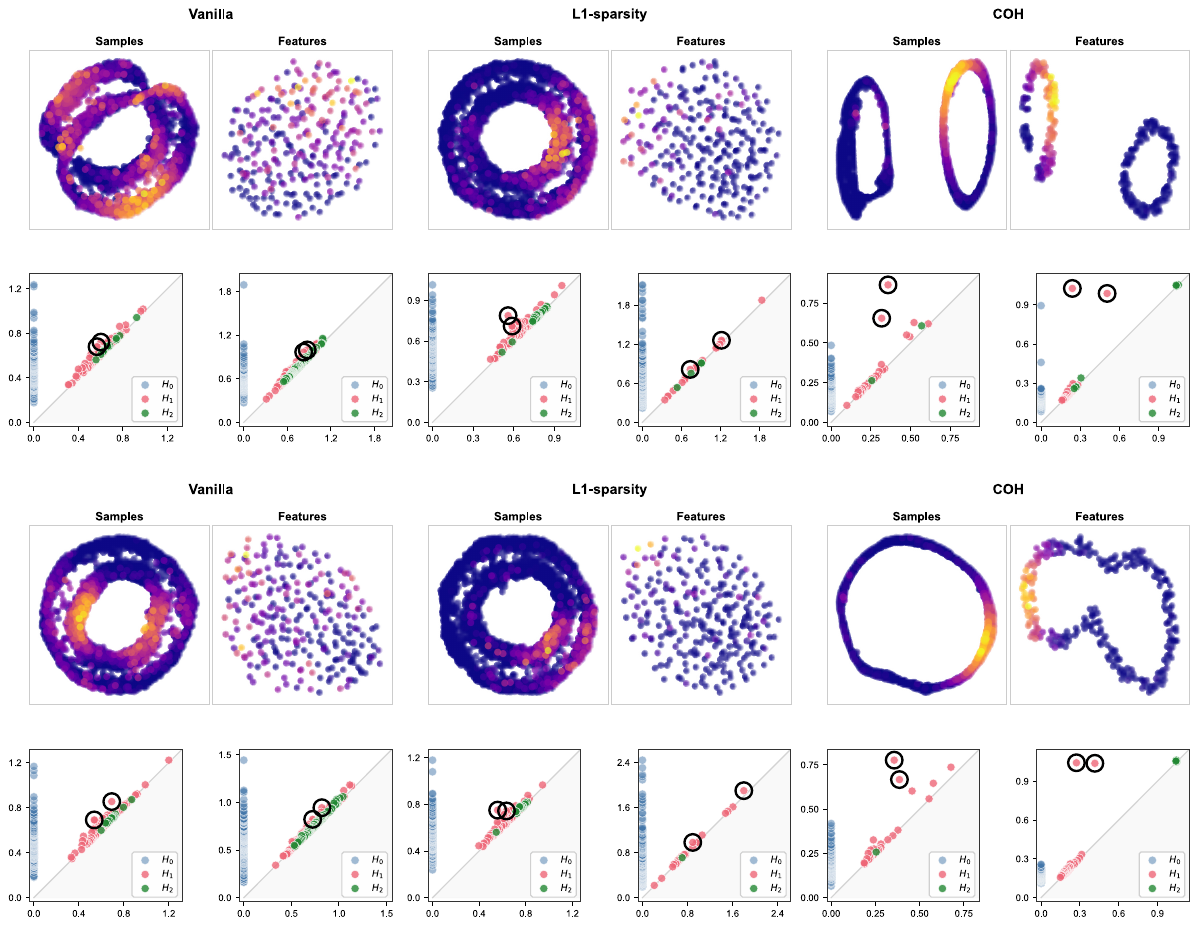}}
    \caption{\textbf{Double Digit Experiment.}
    For two different seeds we show
  UMAP projections of latent samples and latent features, with persistence diagrams for each. We highlight 
  the two most persistent $H_1$ features, representing the two expected
  circles. Samples are colored by activation of a single feature and 
  features are colored by activation on a single sample. The seeds are picked as to show the diversity in the learned latent spaces.}
    \label{fig:merged}
  \end{center}
\end{figure*}

\begin{figure*}[!htbp]
  \vskip 0.2in
  \begin{center}
    \centerline{\includegraphics[width=0.80\textwidth]{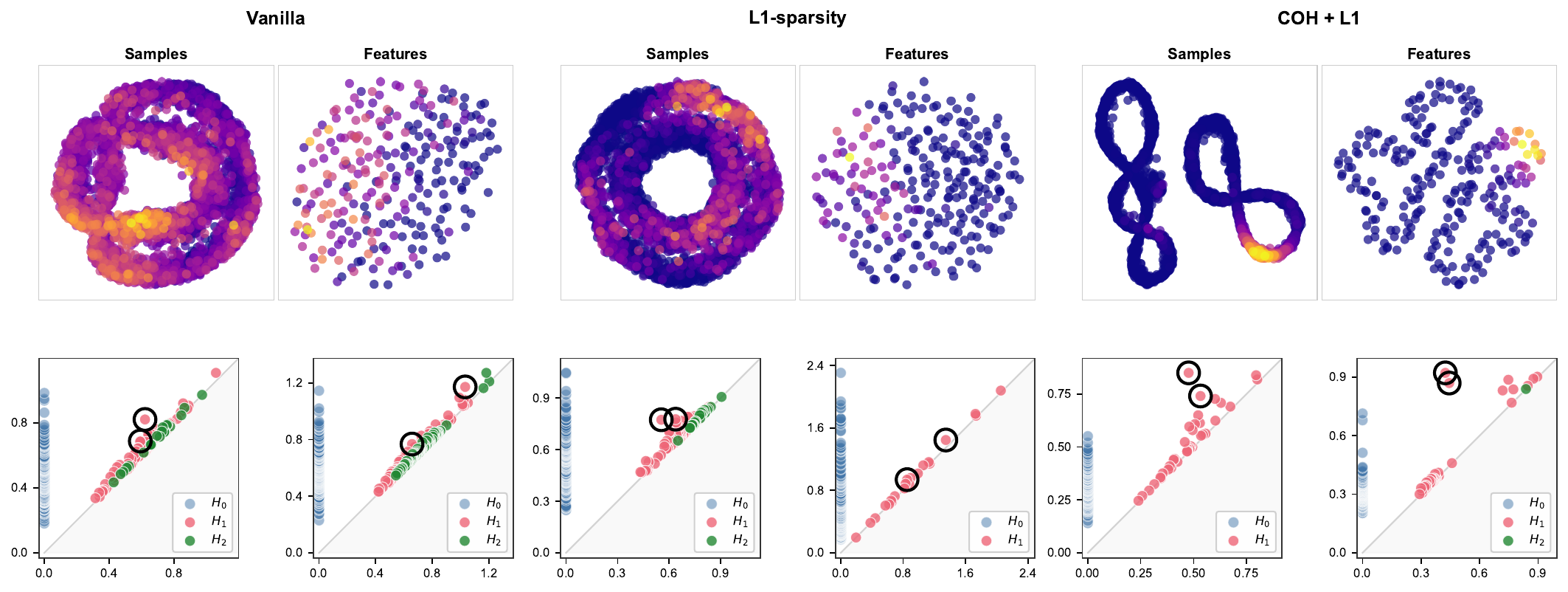}}
    \caption{\textbf{Double Digit Experiment (\textsc{Coh} + L1).}
    For two different seeds we show
  UMAP projections of latent samples and latent features, with persistence diagrams for each. We highlight 
  the two most persistent $H_1$ features, representing the two expected
  circles. Samples are colored by activation of a single feature and 
  features are colored by activation on a single sample.}
    \label{fig:cohl11}
  \end{center}
\end{figure*}

\begin{figure*}[t]
  \vskip 0.2in
  \begin{center}
    \centerline{\includegraphics[width=\textwidth]{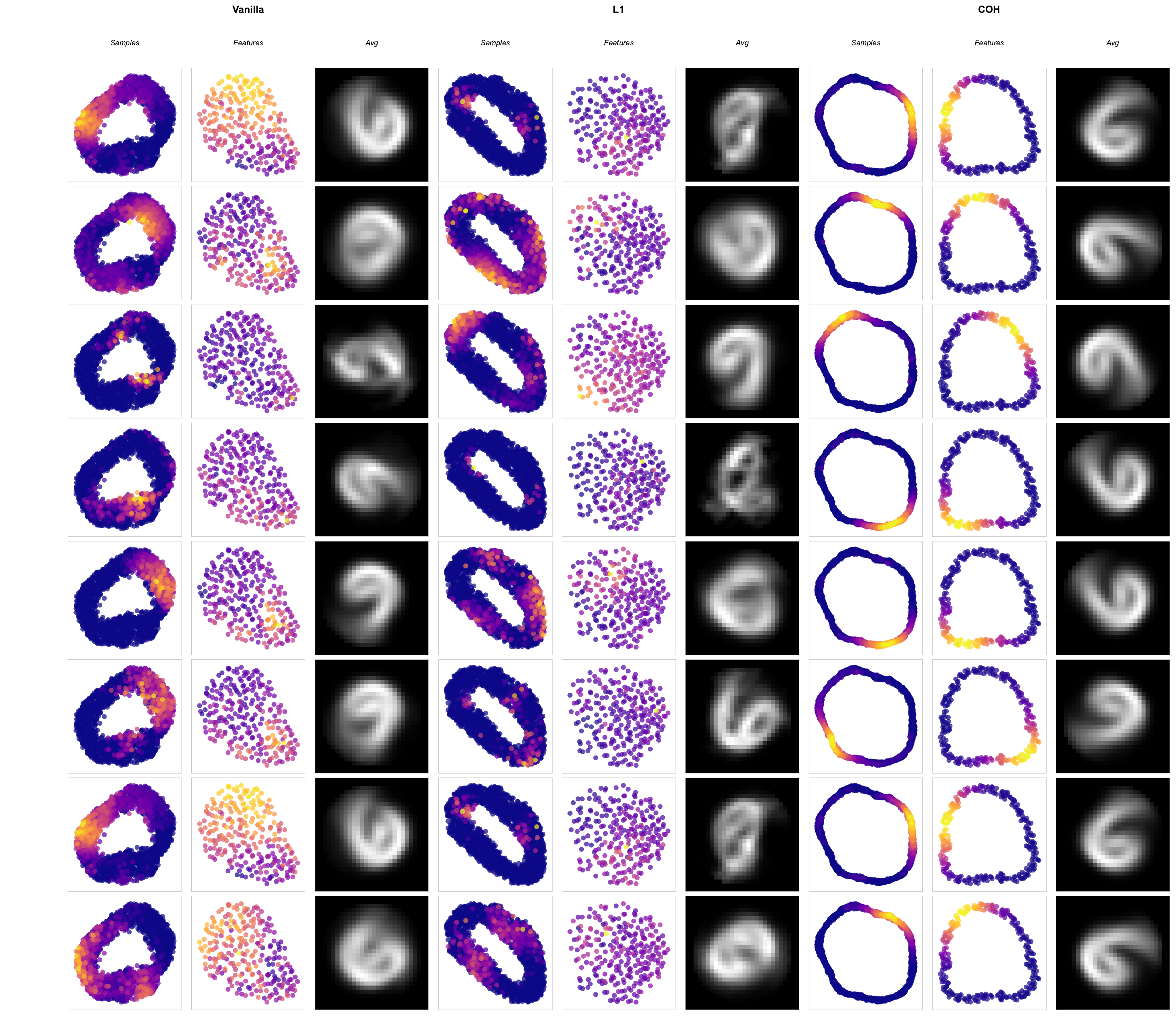}}
    \caption{\textbf{Single digit experiment.}
    We plot UMAP projections of latent samples and latent features colored by a latent feature and sample representatively. For a random sample of features we plot the weighted sum of the original images, the random features corresponds to the coloring of the latent samples. }
    \label{fig:singel_digit_avg}
  \end{center}
\end{figure*}

\begin{figure*}[t]
  \vskip 0.2in
  \begin{center}
    \centerline{\includegraphics[scale=0.5]{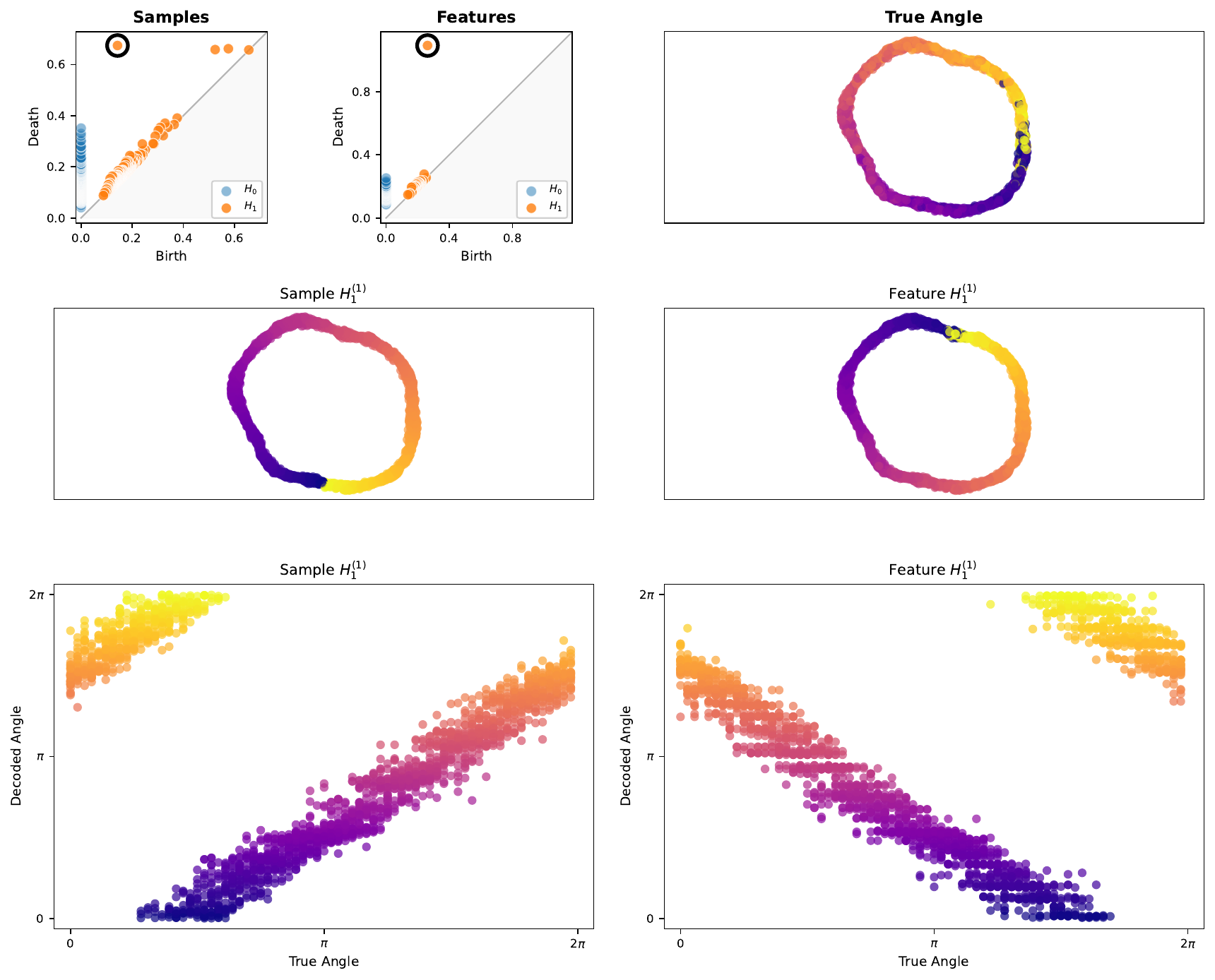}}
    \caption{
    \textbf{Single digit experiment.} We analyse at a representative latent space from the single digit experiment using circular coordinates from persistence (co)homology, and compare it  against the true generating angles. Second row we color the latent samples by circular coordinates. In the last row we plot circular coordinates against the true angle.}
    \label{fig:tuning_curves_simple}
  \end{center}
\end{figure*}

\begin{figure*}[t]
  \vskip 0.2in
  \begin{center}
    \centerline{\includegraphics[width=\textwidth]{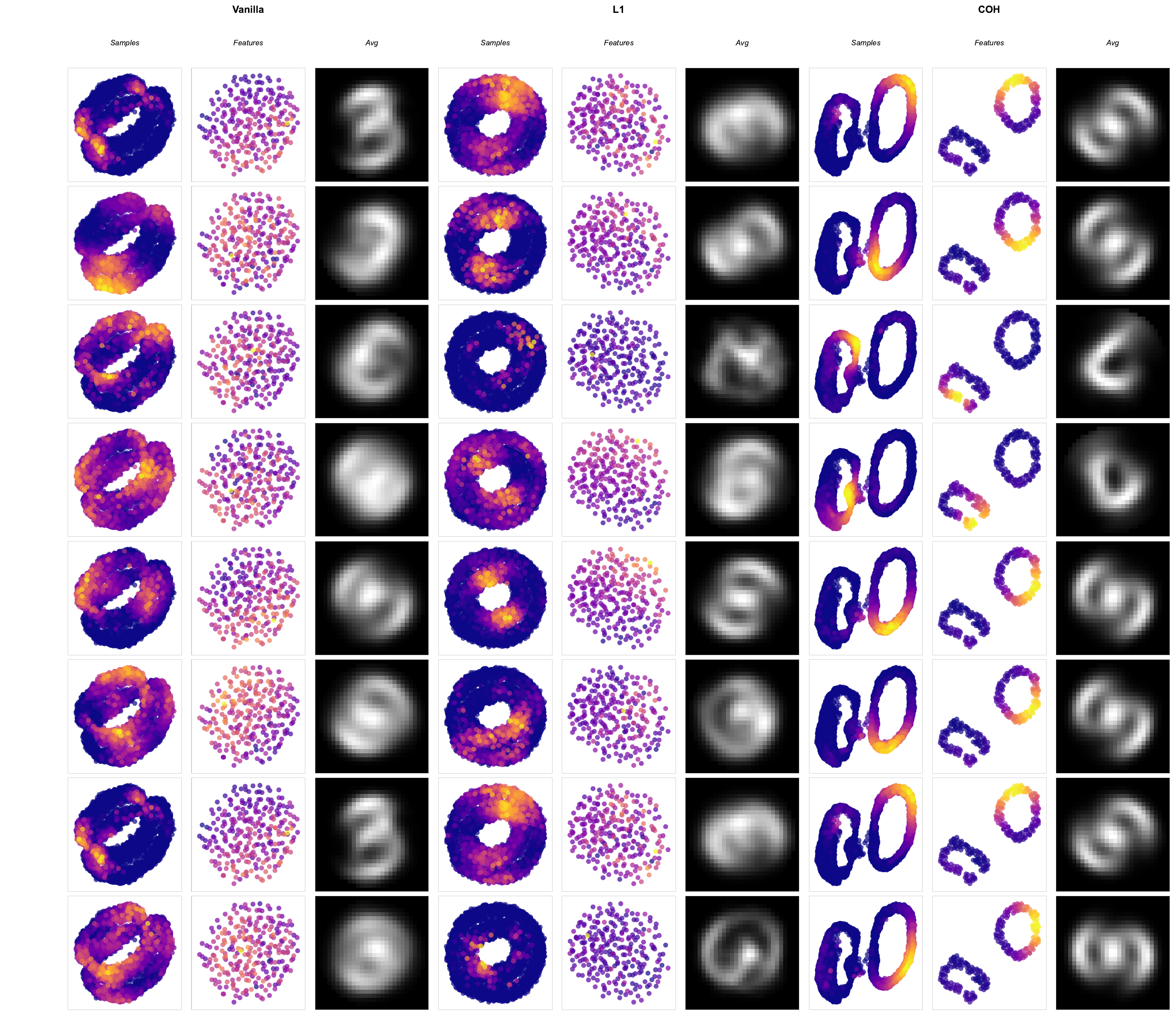}}
    \caption{
    \textbf{Double digit experiment (Separated).}
    We plot UMAP projections of latent samples and latent features colored by a latent feature and sample representatively. For a random sample of features we plot the weighted sum of the original images, the random features corresponds to the coloring of the latent samples. We note \textsc{Coh} nicely distinguishes the two classes into two circles. Features corresponding to the label $3$, have angular tuning modulo $180$ degrees, while features corresponding to the digit $7$ have a $360$ degree angular tuning.}
    \label{fig:double_digit_avg_8}
  \end{center}
\end{figure*}

\begin{figure*}[t]
  \vskip 0.2in
  \begin{center}
    \centerline{\includegraphics[width=\textwidth]{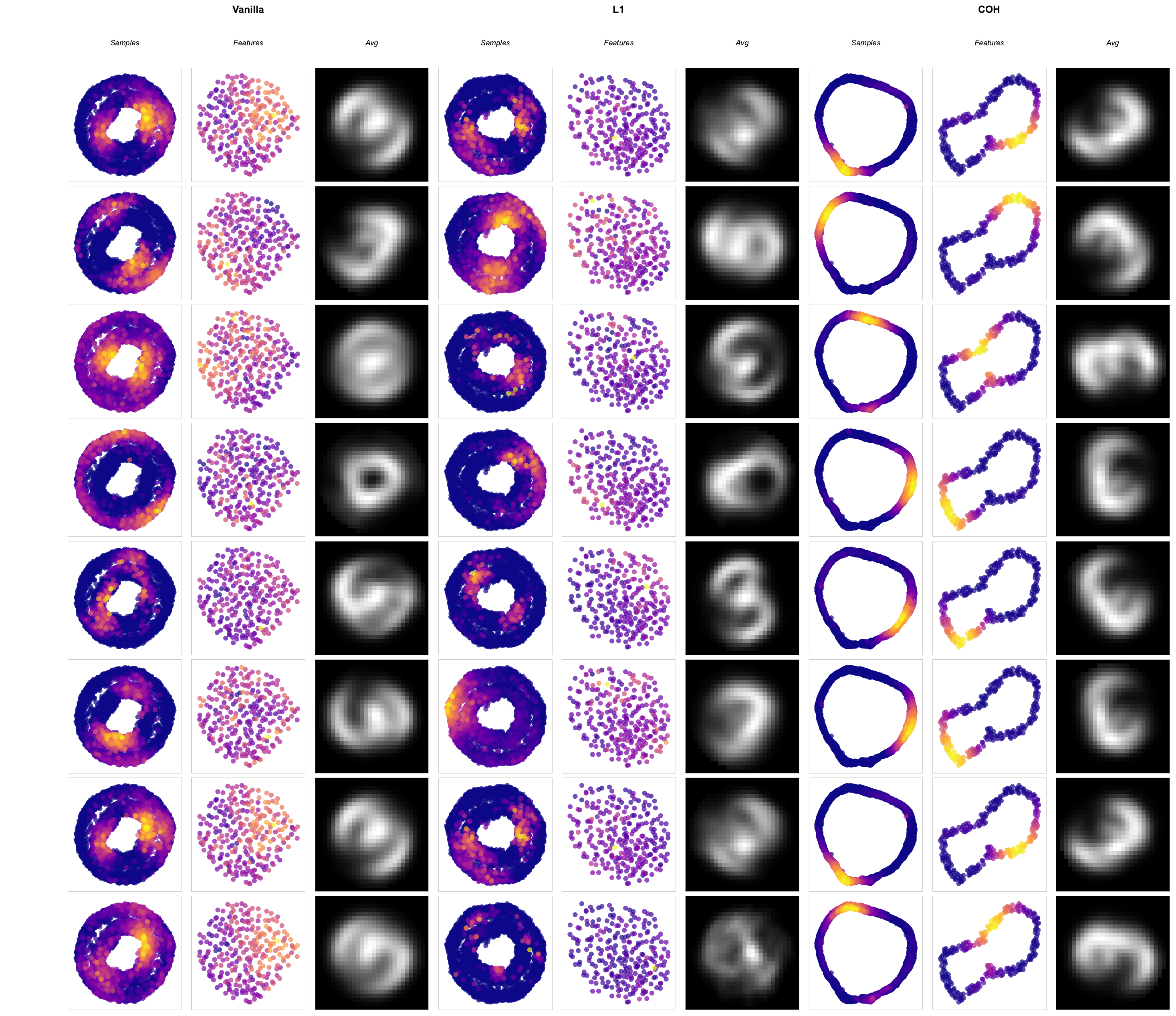}}
    \caption{
    \textbf{Double digit experiment (Merged).}
    We plot UMAP projections of latent samples and latent features colored by a latent feature and sample representatively. For a random sample of features we plot the weighted sum of the original images, the random features corresponds to the coloring of the latent samples. We note, in this case, \textsc{Coh} has merged the two classes into the same circle, but have excellent angular tuning. }
    \label{fig:double_digit_avg_5}
  \end{center}
\end{figure*}

\begin{figure*}[t]
  \vskip 0.2in
  \begin{center}
    \centerline{\includegraphics[width=\textwidth]{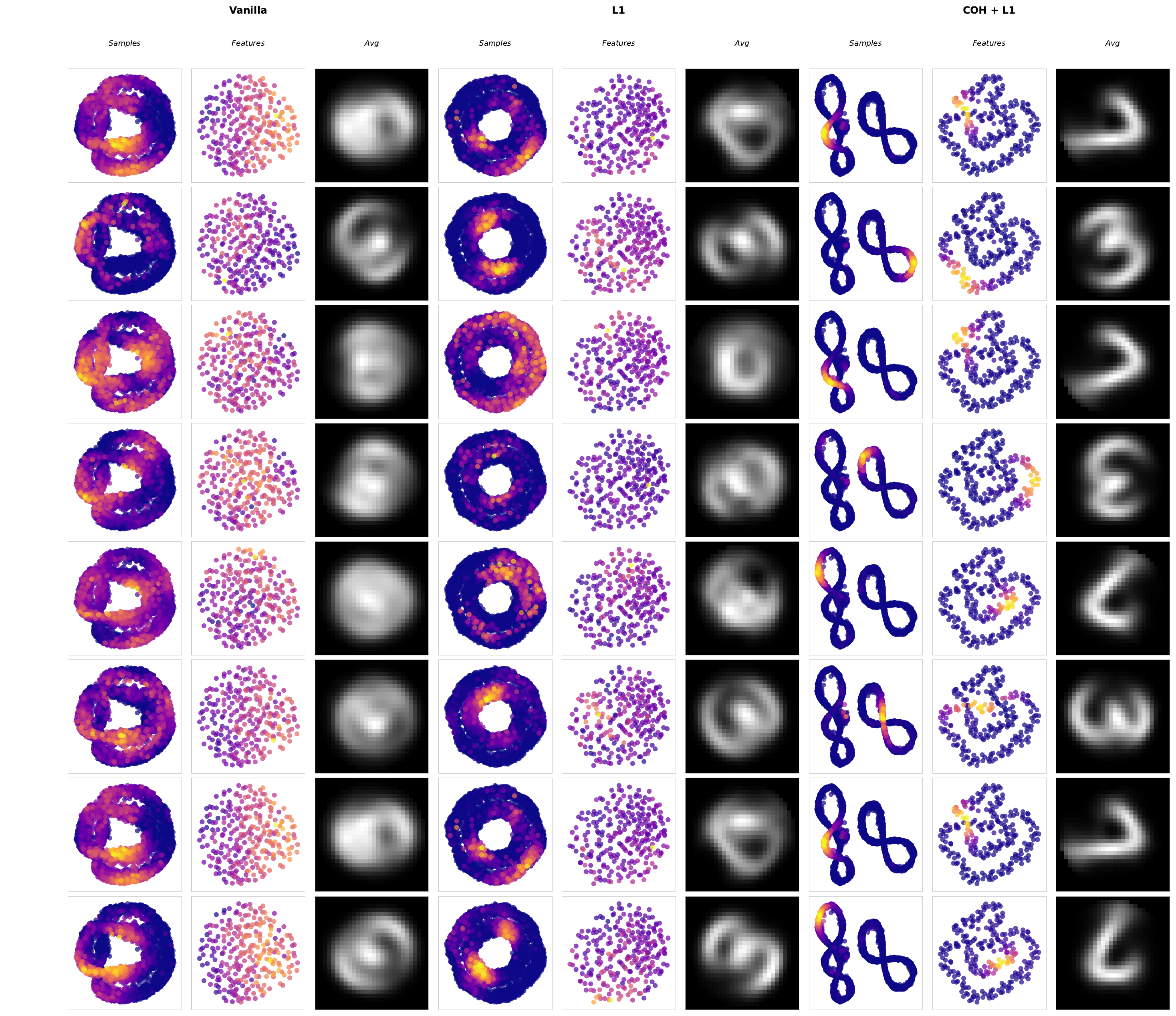}}
    \caption{
    \textbf{Double digit experiment ($L^1$ and \textsc{Coh}) .}
    We plot UMAP projections of latent samples and latent features colored by a latent feature and sample representatively. For a random sample of features we plot the weighted sum of the original images, the random features corresponds to the coloring of the latent samples. Using \textsc{Coh} and L1 together we get a much cleaner results. }
    \label{fig:cohl12}
  \end{center}
\end{figure*}

\begin{figure*}[t]
  \vskip 0.2in
  \begin{center}
    \centerline{\includegraphics[scale=0.5]{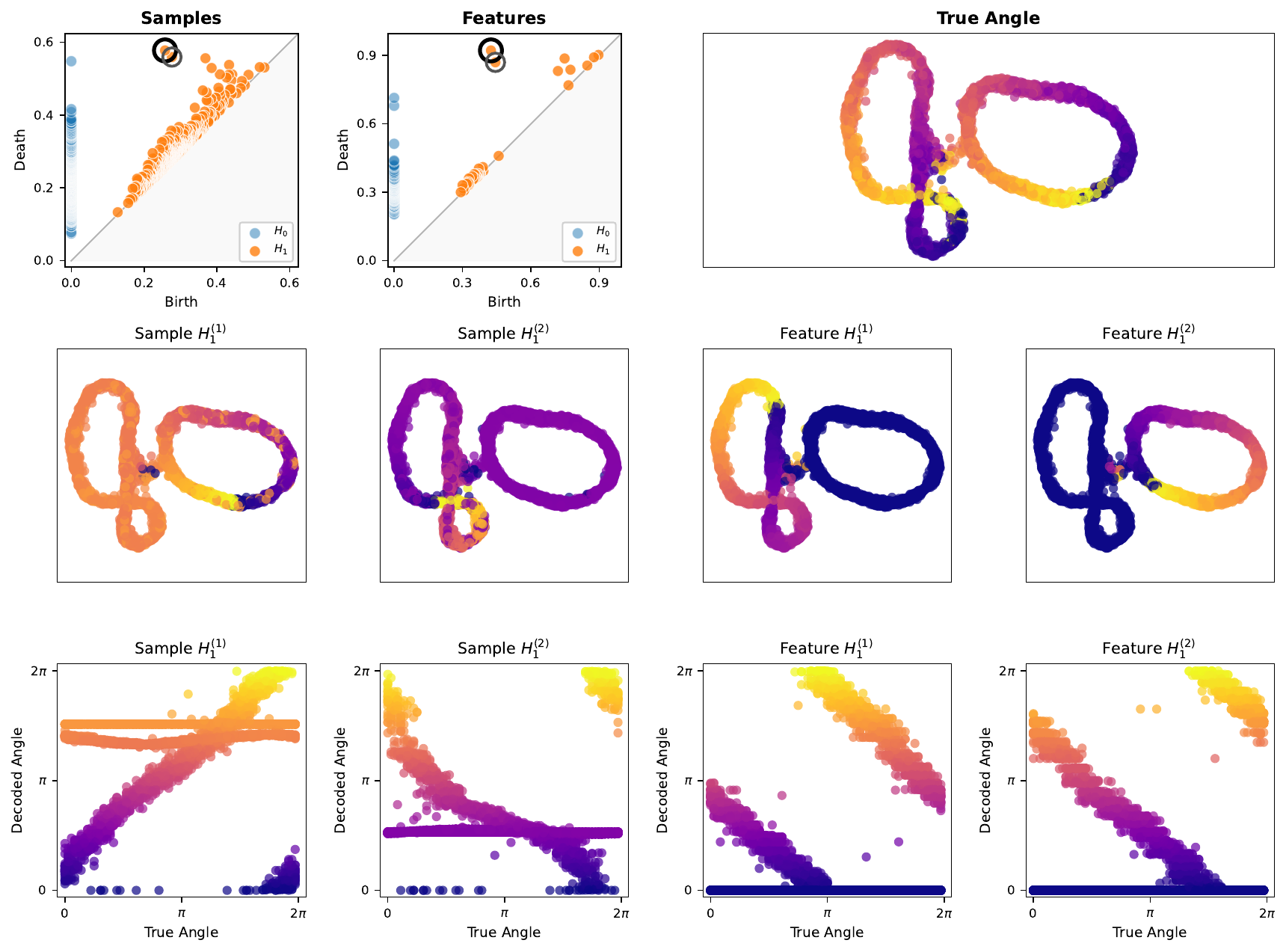}}
    \caption{
   \textbf{Double digit experiment ($L^1$ and $\textsc{Coh}$).} We find a separated a latent space in the double digit experiment using circular coordinates from persistence (co)homology, and compare it against the true generating angles. Note that we have both good angle tuning and component tuning of the features.
    }
    \label{fig:coh13}
  \end{center}
\end{figure*}

\begin{figure*}[t]
  \vskip 0.2in
  \begin{center}
    \centerline{\includegraphics[scale=0.5]{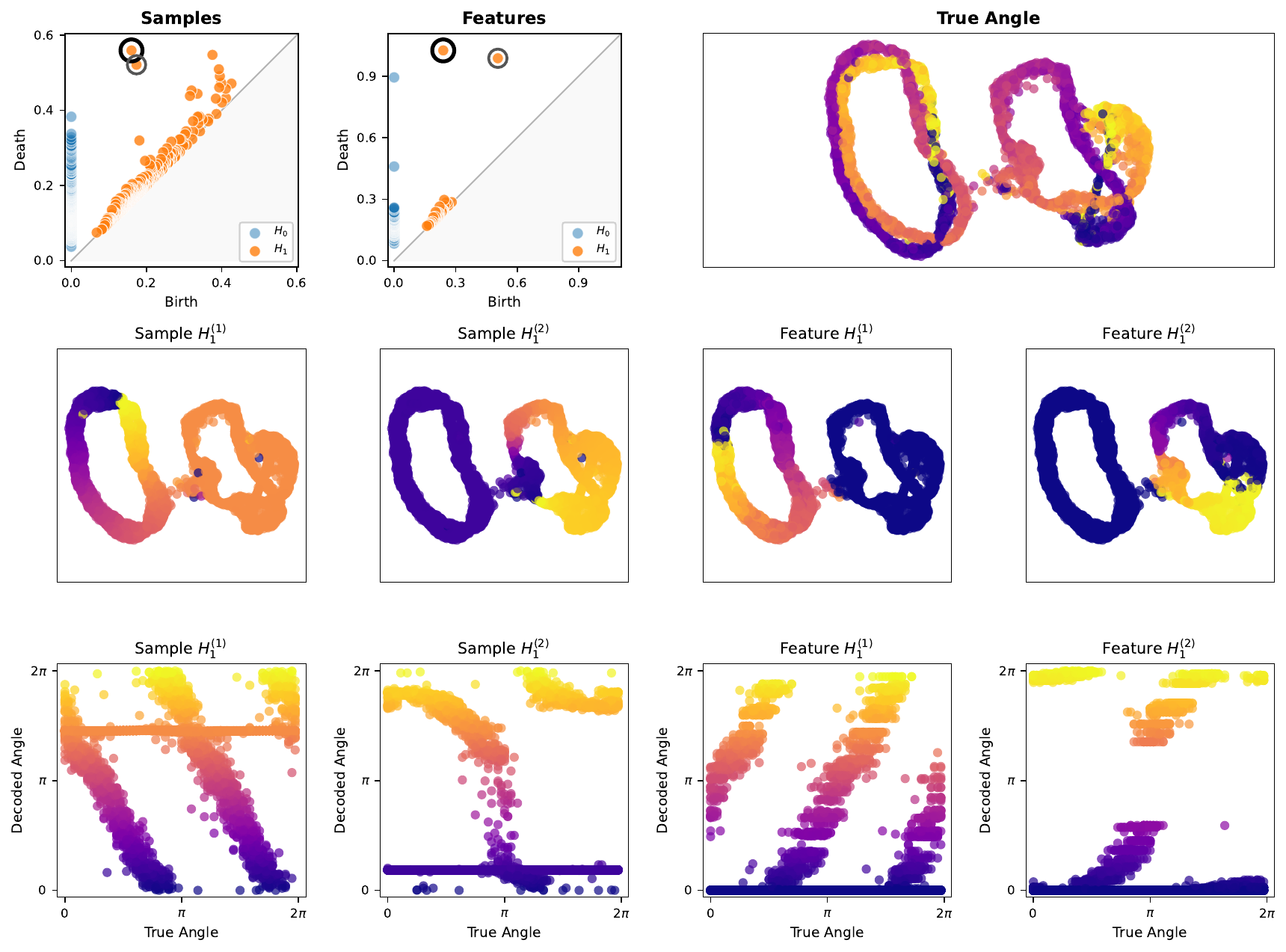}}
    \caption{
   \textbf{Double digit experiment (Separated).} We find a separated latent space in the double digit experiment using circular coordinates from persistence (co)homology, and compare it against the true generating angles.
    }
    \label{fig:tuning_curves_extended}
  \end{center}
\end{figure*}

\begin{figure*}[t]
  \vskip 0.2in
  \begin{center}
    \centerline{\includegraphics[scale=0.5]{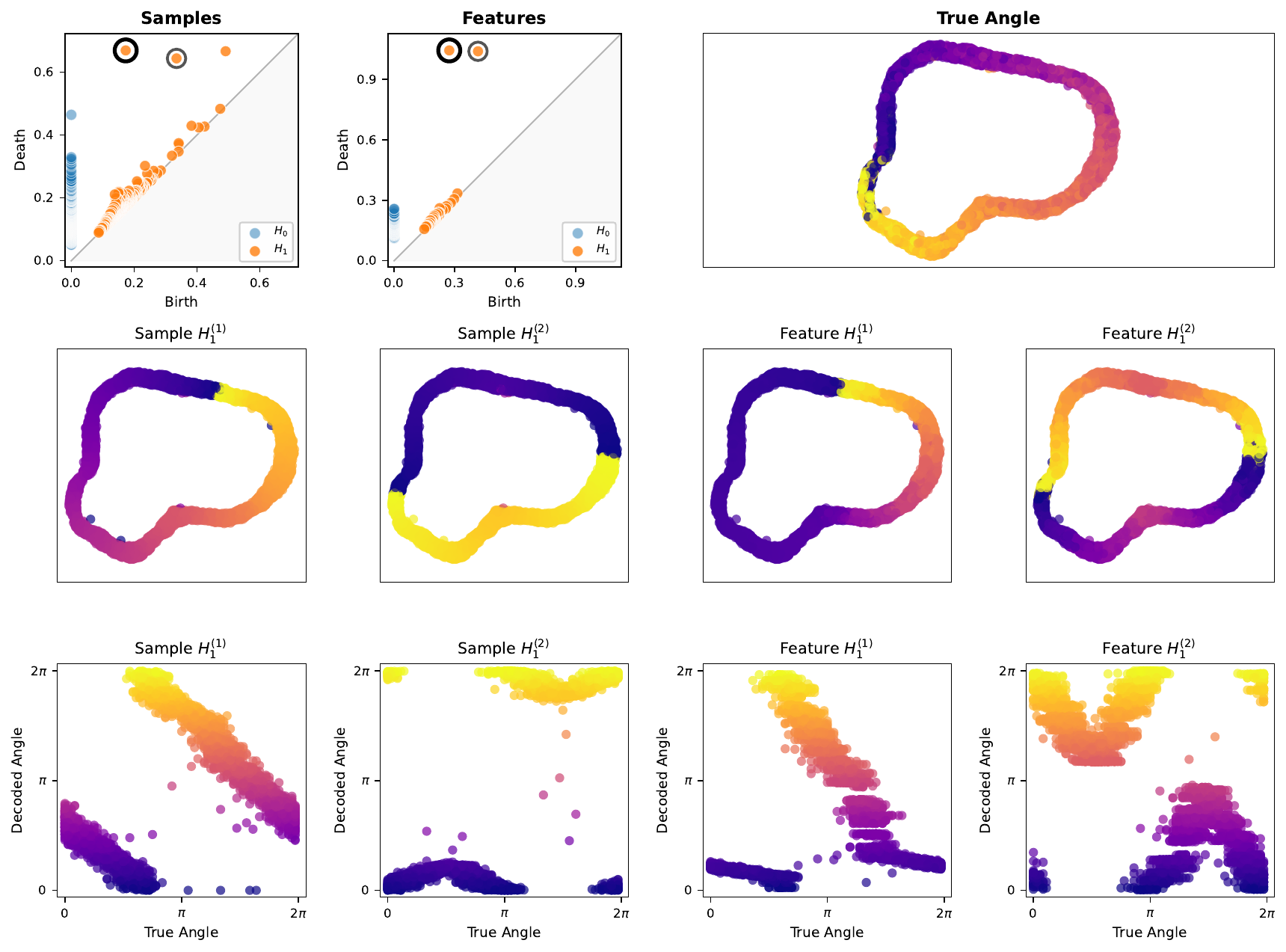}}
    \caption{
   \textbf{Double digit experiment (Merged).}
   We analyse here a merged latent space from the double digit experiment using circular coordinates from persistence (co)homology, and compare it against the true generating angles.
    }
    \label{fig:tuning_curves_extended_5}
  \end{center}
\end{figure*}

\FloatBarrier

\section{\textsc{BERT}: Additional Plots and Tables.}\label{app:BERT}

\begin{figure*}[h]
  \vskip 0.2in
  \begin{center}
    \centerline{\includegraphics[scale=0.3]{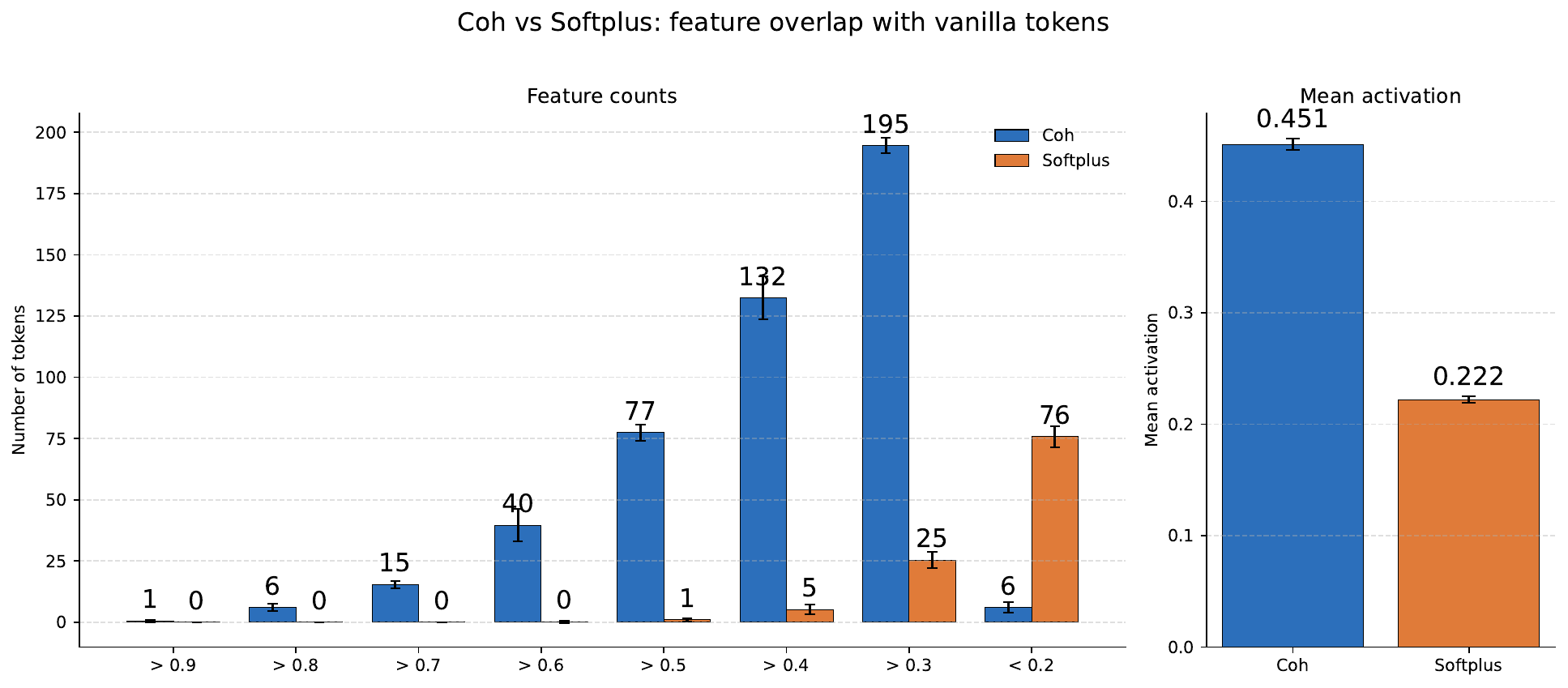}}
    \caption{Mean $\pm$ std over $5$ seeds for the overlap of the top $20$ tokens per feature against the $20$-nearest-neighbor token neighborhood in the Vanilla embedding.}
    \label{feature_stats}
  \end{center}
\end{figure*}

\paragraph{Scoring interpretability using LLMs.}
For the first seed of each of the two non-negative token embeddings, we examine the top $10$ tokens for each of the $256$ features. We feed this list of features into Claude Opus 4.7 and ask it to assign a binary score to each feature according to whether it is interpretable. It finds $81$ \say{interpretable} features in the \textsc{Coh} embedding and zero in the \textsc{Softplus} embedding. We list all of these features, represented by their top $10$ tokens, together with an explanation, in \autoref{tab:interpdict1}. We note that while the \textsc{Softplus} embedding has zero pure features, it has some that come close, such as [`scotland', `moving', `australia', `wales', `cardiff', `ireland', `virginia', `located', `india', `pennsylvania'], which seems to lean toward locations; but if we were to count this loosely, the \textsc{Coh} model would have nearly all of its features classified as \say{interpretable}. We also note that the number of features Claude judges interpretable is similar to the count given by the overlap measure, but the actual numbers by the LLM judge should, of course, be taken with a grain of salt.

\begin{figure*}[t]
  \vskip 0.2in
  \begin{center}
    \centerline{\includegraphics[scale=0.3]{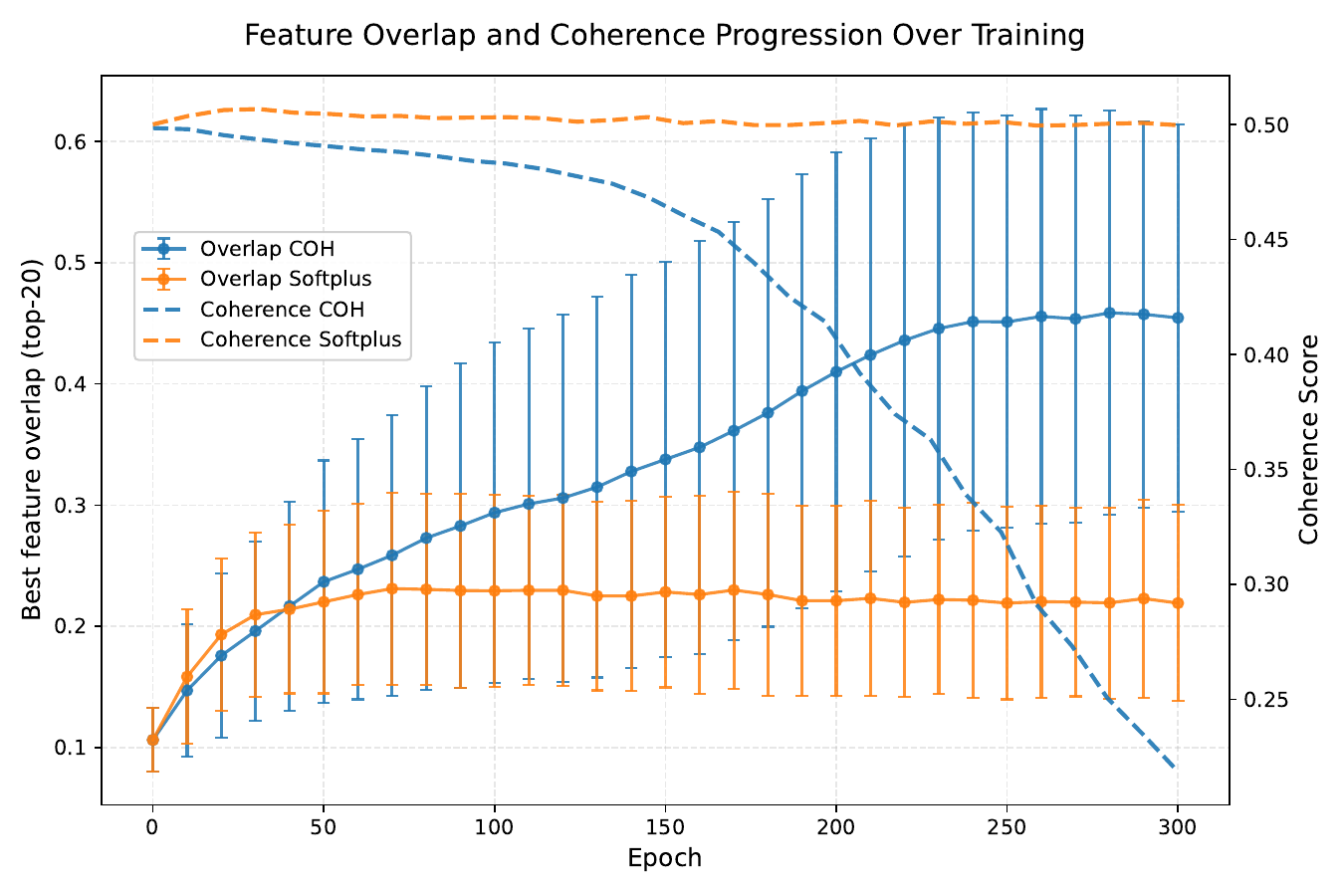}}
    \caption{For the first seed, we plot the average best feature overlap with a token neighborhood of the Vanilla model at each epoch. We also track the average coherence score per epoch.
    }
    \label{FEATURE_OVERLAP}
  \end{center}
\end{figure*}

\begin{figure*}[t]
  \vskip 0.2in
  \begin{center}
    \centerline{\includegraphics[scale=0.3]{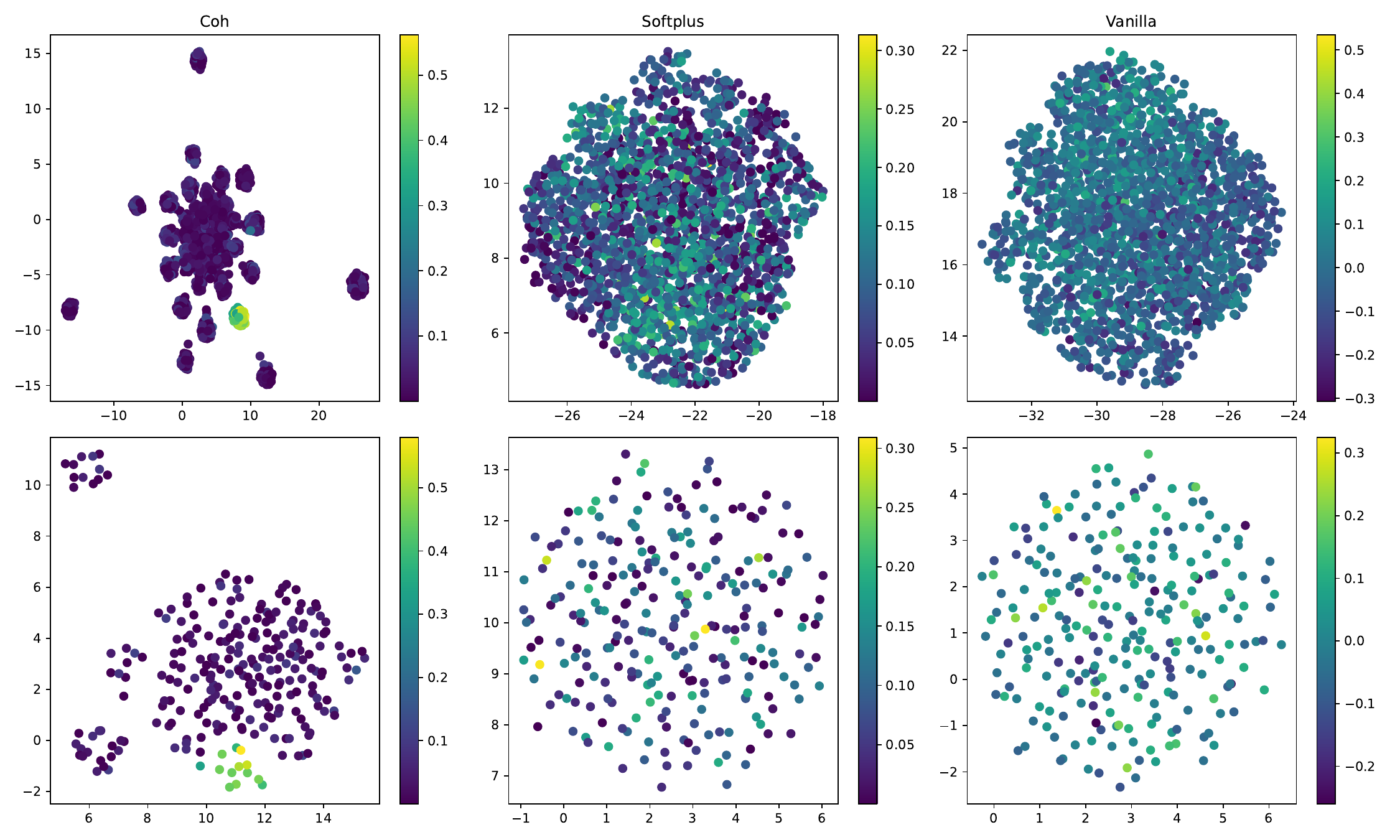}}
    \caption{UMAP projections of the token and feature embeddings for the first seed. The plots are colored by the values of a single token vector and a single feature vector.
    }
    \label{COMPARING_BERT_ACTIVATIONS_final}
  \end{center}
\end{figure*}

\begin{figure*}[t]
  \vskip 0.2in
  \begin{center}
    \centerline{\includegraphics[scale=0.3]{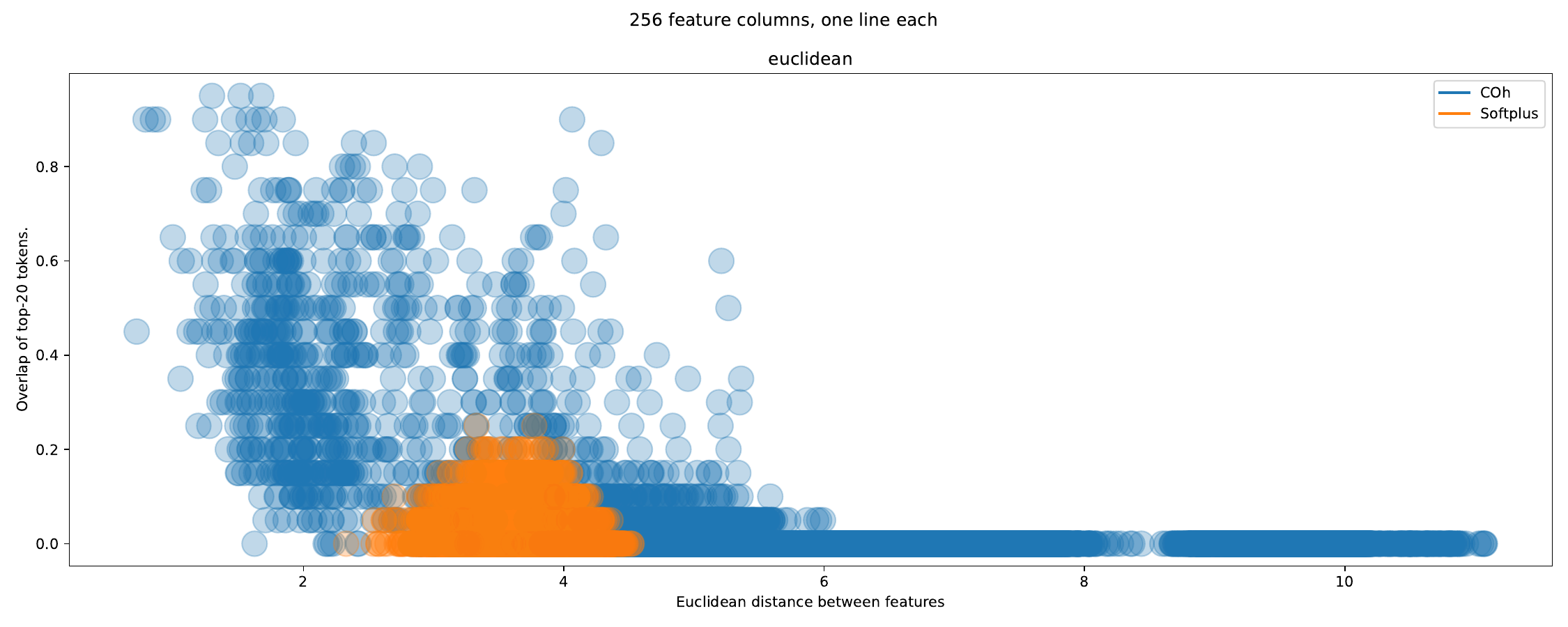}}
    \caption{Again for the first seed: for each feature we plot, on the $x$-axis, its distance to its $20$ nearest feature vectors, and on the $y$-axis, the overlap of its top $20$ tokens with those of the neighboring feature. In words, each point answers: how far am I from that feature ($x$-axis), and how similar am I to it in terms of top-$20$ tokens ($y$-axis)? This showcases that the coherent embedding has more meaningful geometry for the features.
    }
    \label{nearby_features}
  \end{center}
\end{figure*}

\paragraph{Implementation details.}\label{implemention details}
We note that \textsc{Coh} can \say{cheat} by separating the token embedding into two clusters, such as verbs and the rest of the vocabulary. The loss can then push these clusters apart, increasing the row and column scales and thereby decreasing the coherence score. There are several ways to address this; the one we use here is to sample \emph{locally}. We begin by sampling the rows of the matrix globally (at random), and as the embedding becomes more coherent, we sample sub-matrices locally, based on distance anchors chosen at random.

A second issue is that the coherence loss can push tokens to fit the topology of the features, whereas we would rather it push features to fit the topology of the tokens. We encourage this by downscaling the terms governing row variance and covering by a factor of $10$ relative to their column counterparts.

We run one \say{global} iteration of the coherence objective, as in \autoref{alg:coh}, per batch, sampling $1024$ tokens and using all features. Simultaneously, we run $5$ \say{local} iterations: from the $1024$ tokens we pick $5$ anchors at random, and from each anchor we form a sub-matrix of dimension $128 \times 32$ based on the $128$ closest tokens to the anchor and the $32$ most active features given those tokens. We gradually introduce the local sub-matrices using a scheduler based on the coherence of the previous epoch.

\begin{table}[h]
\centering
\begin{tabular}{lcccccc}
\hline
Method & Acc & Coherence  & Time/epoch \\
\hline
Vanilla & $0.5290 \pm 0.003$ & -- & $1.85s$ \\
Softplus & $0.5329 \pm 0.002 $ & $0.5008 \pm 0.0021$ &  $1.87s$ \\
Coh & $0.5236 \pm 0.004$ & $0.2072 \pm 0.0261$ &  $3.68s$ \\
\hline
\end{tabular}
\caption{Average results over $5$ seeds after $300$ epochs.}
\label{tab:results_BERT}
\end{table}

\FloatBarrier

\vspace{1em}

\begin{longtable}{>{\centering\arraybackslash}p{1.1cm} p{9.5cm} p{3.2cm}}
\caption{A list of all \say{pure} features according to Claude for the first seed for the coherent embedding.}
\label{tab:interpdict1}\\
\toprule
\textbf{Feat.} & \textbf{Top 10 tokens} & \textbf{Concept} \\
\midrule
\endhead

\multicolumn{3}{l}{\textit{Numbers}}\\
\midrule
4   & 50, ten, 24, 100, 60, 20, 19, 15, 40, 12          & Cardinal numbers \\
29  & 11, 13, 17, 23, 18, 14, 24, 22, 38, 26            & Two-digit numbers \\
42  & 30, 45, 39, 29, 33, 28, 32, 50, 70, 19            & Two-digit numbers \\
136 & 26, 13, 18, 28, 31, 27, 17, 23, 39, 22            & Two-digit numbers \\
143 & 14, 13, 12, 22, 11, 24, 28, 19, 16, 17            & Two-digit numbers \\
144 & 75, 45, 38, 8, 70, 80, 40, 31, 36, 35             & Two-digit numbers \\
146 & eight, ten, nine, seven, 300, 24, 18, 14, 23, 27  & Numbers \\
177 & 28, 27, 80, 33, seven, 35, 31, 39, 60, 90          & Numbers \\
186 & 400, 200, 16, 150, 100, 26, 23, 300, 24, 19       & Numbers \\
220 & 75, 80, 90, 60, 100, 38, 300, 200, 70, 50         & Numbers \\
239 & 29, 28, 75, 24, 14, 11, 27, 22, 45, 21            & Two-digit numbers \\
\midrule

\multicolumn{3}{l}{\textit{Years}}\\
\midrule
10  & 2007, 1972, 2005, 1999, 1984, 1970, 1964, 2016, 2008, 1960 & Years \\
26  & 1999, 2014, 1997, 2002, 1994, 1991, 1964, 2015, 1996, 2004 & Years \\
36  & 1993, 2002, 1997, 1999, 1986, 2003, 1985, 1984, 1994, 1992 & Years \\
112 & 2009, 2014, 2008, 2010, 2007, 2013, 2011, 2012, 2015, 2004 & Years \\
165 & 1998, 1991, 1995, 1999, 1993, 1994, 1972, 1997, 1986, 1990 & Years \\
190 & 2016, 2015, 2014, 2011, 1940, 2013, 2004, 2009, 1999, 2010 & Years \\
241 & 1993, 1988, 1998, 1986, 1970, 1984, 1997, 2002, 1989, 2001 & Years \\
252 & 2005, 2001, 2002, 2004, 2006, 1995, 2000, 1992, 2003, 1988 & Years \\
\midrule

\multicolumn{3}{l}{\textit{Verbs by semantic role}}\\
\midrule
0   & entered, killed, attacked, struck, arrived, moved, captured, formed, returned, ran & Motion / action past verbs \\
65  & turned, sold, cut, passed, dropped, defeated, reached, entered, divided, struck & Past action verbs \\
72  & give, see, write, know, bring, follow, keep, want, begin, make & Basic action verbs \\
81  & travel, meet, hold, find, produce, reach, stop, build, fight, follow & Action verbs \\
93  & find, write, produce, create, believe, build, know, hold, reach, leave & Create / produce verbs \\
189 & leave, stop, get, meet, follow, keep, find, allow, continue, try & Continuation verbs \\
245 & avoid, follow, keep, prevent, give, provide, begin, continue, perform, build & Continuation / prevention \\
54  & established, rejected, introduced, accepted, developed, founded, sold, played, abandoned, controlled & Institutional action verbs \\
57  & performed, played, produced, recorded, joined, met, served, done, created, achieved & Career / creative verbs \\
61  & released, raised, removed, defeated, adopted, performed, achieved, published, gained, left & Achievement past verbs \\
89  & provided, held, faced, created, shot, ran, introduced, sent, fired, paid & Past-tense action verbs \\
97  & extended, brought, left, adopted, passed, represented, earned, introduced, broken, turned & Past action verbs \\
127 & formed, developed, sustained, struck, affected, taken, controlled, damaged, occurred, created & Damage / formation verbs \\
128 & opened, signed, started, attended, earned, founded, constructed, married, met, issued & Life-event verbs \\
155 & took, opened, grew, ran, brought, adopted, carried, taken, turned, started & Past action verbs \\
175 & maintained, accepted, held, remained, created, developed, discovered, passed, visited, formed & Maintain / hold verbs \\
179 & published, issued, launched, conducted, provided, established, featured, included, founded, visited & Publish / launch verbs \\
185 & seen, allowed, kept, used, observed, offered, lived, inspired, ordered, determined & Past participle verbs \\
187 & reduced, killed, affected, damaged, raised, sustained, increased, destroyed, controlled, carried & Damage / change verbs \\
193 & appointed, promoted, recognized, awarded, listed, elected, assigned, ordered, proposed, named & Appointment / award verbs \\
210 & continued, intended, used, described, peaked, found, refused, lived, managed, allowed & Past verbs \\
211 & faced, affected, represented, damaged, covered, controlled, captured, destroyed, defeated, occurred & Affect / damage verbs \\
212 & adopted, signed, replaced, marked, attacked, joined, achieved, rejected, earned, paid & Action past verbs \\
223 & dropped, planned, made, gained, produced, shot, maintained, raised, arrived, captured & Past action verbs \\
229 & produced, developed, issued, caused, struck, shot, completed, recorded, marked, formed & Produce / cause verbs \\
235 & lost, achieved, filmed, dropped, held, produced, discovered, entered, met, faced & Past action verbs \\
\midrule

\multicolumn{3}{l}{\textit{Speech / cognition verbs}}\\
\midrule
66  & expressed, claimed, concluded, felt, revealed, attempted, decided, agreed, explained, meant & Cognition / statement \\
121 & commented, wanted, noted, appeared, agreed, read, praised, wrote, criticized, got & Reaction verbs \\
166 & claimed, wanted, argued, suggested, saw, appear, felt, stated, attempted, read & Cognition / statement \\
176 & revealed, showed, read, concluded, meant, uses, suggested, got, claimed, felt & Reveal / conclude verbs \\
232 & noted, suggested, thought, meant, argued, revealed, showed, believed, expressed, concluded & Thought / argument verbs \\
\midrule

\multicolumn{3}{l}{\textit{Verb participles ({-ing})}}\\
\midrule
111 & losing, remaining, growing, returning, winning, resulting, doing, selling, coming, finding & {-ing} participles \\
\midrule

\multicolumn{3}{l}{\textit{Adverbs by function}}\\
\midrule
87  & typically, thus, always, usually, finally, therefore, ultimately, probably, initially, yet & Sentence adverbs \\
94  & probably, initially, still, sometimes, finally, therefore, once, eventually, now, thus & Temporal adverbs \\
124 & once, even, immediately, sometimes, finally, possibly, previously, almost, initially, soon & Temporal adverbs \\
150 & now, actually, already, nearly, still, almost, always, typically, simply, sometimes & Temporal / degree \\
170 & currently, particularly, especially, usually, eventually, always, thus, often, simply, perhaps & Sentence adverbs \\
217 & significantly, typically, generally, mainly, slightly, primarily, officially, fully, relatively, initially & Degree / manner \\
238 & possibly, mainly, particularly, mostly, relatively, slightly, probably, especially, usually, perhaps & Hedging adverbs \\
244 & relatively, largely, generally, heavily, previously, too, really, already, increasingly, widely & Degree adverbs \\
\midrule

\multicolumn{3}{l}{\textit{Adjectives}}\\
\midrule
14  & major, important, famous, prominent, professional, possible, successful, special, powerful, short & Importance adjectives \\
45  & separate, direct, subsequent, surrounding, supporting, following, previous, early, earlier, increasing & Sequence adjectives \\
52  & last, fourth, sixth, fifth, third, earliest, final, next, first, seventh & Ordinal / sequence words \\
76  & little, good, serious, better, strong, unknown, difficult, significant, certain, close & Evaluative adjectives \\
138 & soviet, european, roman, royal, british, portuguese, jewish, indian, domestic, italian & Nationality / empire adj. \\
\midrule

\multicolumn{3}{l}{\textit{Nouns}}\\
\midrule
22  & states, places, sets, matches, times, hours, weeks, plays, months, terms & Plural count nouns \\
41  & staff, police, workers, administration, soldiers, officers, crew, troops, writers, authorities & Groups of personnel \\
58  & studies, members, elements, images, sources, characters, levels, stars, pieces, systems & Plural abstract nouns \\
99  & systems, artists, areas, countries, parts, images, lines, regions, elements, stars & Plural category nouns \\
135 & drama, fiction, opera, comedy, plot, script, novel, movie, book, story & Narrative genres \\
151 & school, museum, hotel, academy, theatre, club, institute, station, assembly, battery & Institutional buildings \\
156 & actor, writer, director, producer, author, coach, singer, critic, manager, commander & Professions \\
162 & richard, robert, michael, david, thomas, peter, james, edward, paul, john & Male first names \\
200 & husband, brother, sister, marriage, daughter, birth, son, mother, wife, family & Family / kinship \\
208 & effects, damage, winds, casualties, rainfall, plans, reports, conditions, orders, evidence & Effects / damage nouns \\
234 & committee, commission, company, foundation, council, post, party, regiment, battalion, program & Organizational bodies \\
\midrule

\multicolumn{3}{l}{\textit{Places / geography}}\\
\midrule
25  & scotland, ireland, japan, china, manchester, minnesota, australia, paris, canada, chicago & Places (countries / cities) \\
85  & israel, wales, california, washington, pennsylvania, chicago, croatia, canada, york, texas & Places (countries / states) \\
254 & london, australia, britain, germany, japan, france, philadelphia, carolina, africa, england & Countries / places \\
\midrule

\multicolumn{3}{l}{\textit{Function words \& units}}\\
\midrule
68  & \%, inches, metres, feet, mi, $^{\circ}$, km, percent, cm, hundred & Units of measurement \\
69  & when, around, throughout, within, upon, before, during, towards, alongside, via & Temporal / spatial preps \\
169 & upon, before, towards, within, toward, alongside, through, onto, around, via & Directional prepositions \\

\bottomrule
\end{longtable}

\FloatBarrier

\section{Preliminaries}\label{app:prel}

 We briefly recap the essential definitions and two practical lemmas that we later use to prove the interleaving.

\begin{definition}
    A \emph{simplicial complex} $K$ is a finite set of non-empty finite sets, that is closed under taking non-empty subsets. The \emph{vertex set} of $K$, denoted $V(K)$, is the set of singletons of $K$. We say $f: K \rightarrow L$ is a map between simplicial complexes if it is a map on the vertex set $f: V(K) \rightarrow V(L)$ such that it extends to simplices in the following way: if $\sigma \in K $, then $f(\sigma) \in L$.

    \end{definition}
    \begin{definition}
    Let $I$ be a totally ordered set. A \emph{filtered simplicial complex}, ${\F} \coloneqq\{F_s\}_{s \in I}$, is a sequence of simplicial complexes such that if $s \leq t \in I$, then $F_s \subseteq F_t$ and $\bigcup_{s \in I} V(F_s)$ is finite. We call 
    $V(\mathcal{F}) \coloneqq \bigcup_{s \in I} V(F_s)$ for the \emph{vertex set} of ${\F}$.
\end{definition}

\begin{definition}
    Let $(X, d_X)$ be a metric space and $P \subset X$ be a finite set of samples, then
    the \emph{Vietoris-Rips filtration} of $P$ is the filtered simplicial complex over $\R$, defined by \[\VR_t(P) \coloneqq \{\sigma \subset P \mid \sigma \neq \emptyset, d_X(x,y) \leq 2t \quad \forall x,y \in \sigma\}\]
    at filtration value $t \in \R$.
\end{definition}

\begin{definition}
    Let $f,g: X \rightarrow Y$ be continuous maps. A \emph{homotopy} between $f$ and $g$ is a continuous map $H: X \times I \rightarrow Y$, where $I$ is the unit interval, such that $H(x,0) = f(x)$ and $H(x,1) = g(x)$.
\end{definition}

\begin{definition}

Let \( \{p_0, p_1, \dots, p_n\} \subset \R^n \) be a finite set of samples and let $t > 0 $ be a scalar. The \emph{geometric realization} of \( VR_\epsilon(P) \), denoted \( | VR_\epsilon(P)| \), is a topological space constructed as follows:
\[
 |VR_t(P)| = \bigcup_{\sigma \in VR_t(P)} \left\{ \sum_{p_i \in \sigma} \lambda_i p_i \mid \lambda_i \geq 0, \sum_{p_i \in \sigma} \lambda_i = 1 \right\} \subset \R^n.
\]

\end{definition}

\begin{definition}\label{def:interleaving}
    Let $\mathcal{K} = \{K_t\}_{t \in \R} $ and $ \mathcal{L} = \{L_t\}_{t \in \R}$ be filtered simplicial complexes. Let $\delta \geq 0$ be a scalar. We say $\mathcal{K}$ and $\mathcal{L}$ are $\delta$-interleaved if there exist families of maps $\{f_t : K_t \rightarrow L_{t + \delta}\}_{t \in \R}$ and $\{g_t : L_t \rightarrow K_{t+ \delta}\}_{t \in \R}$, such that for any $t \leq s \in \R$, the following diagrams commute up to homotopy, when passing to the realization.

 \[\begin{tikzcd}
	{K_t} && {K_{t+\delta}} && {K_{t+2\delta}} &&&& {K_{t+\delta}} \\
	\\
	&& {L_{t+\delta}} &&&& {L_t} && {L_{t+\delta}} && {L_{t + 2 \delta}}
	\arrow[hook, from=1-1, to=1-3]
	\arrow["{f_t}"', from=1-1, to=3-3]
	\arrow[hook, from=1-3, to=1-5]
	\arrow["{f_{t+\delta}}", from=1-9, to=3-11]
	\arrow["{g_{t+\delta}}"', from=3-3, to=1-5]
	\arrow["{g_t}", from=3-7, to=1-9]
	\arrow[hook, from=3-7, to=3-9]
	\arrow[hook, from=3-9, to=3-11]
\end{tikzcd}\]
\[\begin{tikzcd}
	{K_t} && {K_{s}} && {} && {K_{t + \delta}} && {L_{s+\delta}} \\
	\\
	& {L_{t+\delta}} && {L_{s + \delta}} && {L_t} && {L_s}
	\arrow[hook, from=1-1, to=1-3]
	\arrow["{f_t}"', from=1-1, to=3-2]
	\arrow["{f_s}", from=1-3, to=3-4]
	\arrow[hook, from=1-7, to=1-9]
	\arrow[hook, from=3-2, to=3-4]
	\arrow["{g_t}", from=3-6, to=1-7]
	\arrow[hook, from=3-6, to=3-8]
	\arrow["{g_s}"', from=3-8, to=1-9]
\end{tikzcd}\]

\end{definition}

\begin{definition}\label{contiguous}
    Two simplicial maps~$f,g \colon S \rightarrow K$ are called \textit{contiguous} if for every simplex~$\sigma \subseteq S$ we have that~$f(\sigma) \cup g(\sigma)$ is a simplex in~$K$. 
\end{definition}

\begin{lemma}\label{lemma:contiguous_homotopy}\cite{Spanier}[Lemma 2, p.130]
    Two contiguous simplicial maps become homotopic after geometric realization. 
\end{lemma}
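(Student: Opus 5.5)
We take the straight-line homotopy between the two realized maps and use contiguity to show it never leaves the realization of $K$. Throughout we work with the standard abstract geometric realization: $|S|$ is the set of functions $x\colon V(S)\to[0,1]$ whose support is a simplex of $S$ and whose values sum to $1$, with the topology glued from its closed simplices. We do not use the embedding $\sum \lambda_i p_i\subset\R^n$. In the embedded picture distinct simplices could overlap if the points are not in general position, which would make "the simplex carrying $x$" ill-defined. For a finite complex the abstract realization is a finite union of closed simplices, so a map out of it is continuous as soon as it is continuous on each closed simplex.

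First, describe the realized maps. A simplicial map $f\colon S\to K$ realizes to the affine extension
\[
|f|\Bigl(\textstyle\sum_v x_v v\Bigr)=\textstyle\sum_v x_v f(v),
\]
where coefficients of vertices with the same image are added. If $x$ lies in the closed simplex $|\sigma|$, then $|f|(x)\in|f(\sigma)|$, and similarly $|g|(x)\in|g(\sigma)|$.

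Next, define
\[
H(x,t)=(1-t)\,|f|(x)+t\,|g|(x),
\]
computed coordinatewise in $[0,1]^{V(K)}$. Let $\sigma$ be the support of $x$. By contiguity $\rho=f(\sigma)\cup g(\sigma)$ is a simplex of $K$. Both $|f|(x)$ and $|g|(x)$ are probability vectors supported in $\rho$, hence lie in the closed simplex $|\rho|$. This closed simplex is convex, so $H(x,t)\in|\rho|\subseteq|K|$ for every $t\in[0,1]$. Moreover $H(x,0)=|f|(x)$ and $H(x,1)=|g|(x)$.

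The step needing the most care is that $H$ is well defined and continuous across faces. On each closed simplex $|\sigma|\times[0,1]$ the map $H$ is given by a single formula that is continuous in $(x,t)$, since $|f|$ and $|g|$ are affine on $|\sigma|$. Suppose $x$ lies in a face $|\tau|$ with $\tau\subseteq\sigma$. Then the value of $H(x,t)$ is the same whether it is computed on $|\tau|$ or on $|\sigma|$, because the formula only uses the barycentric coordinates of $x$. It also lands in $|f(\tau)\cup g(\tau)|\subseteq|f(\sigma)\cup g(\sigma)|$, so the two descriptions agree. The pasting lemma applied to the finitely many closed sets $|\sigma|\times[0,1]$ then makes $H$ continuous. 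Hence $H$ is a homotopy from $|f|$ to $|g|$.
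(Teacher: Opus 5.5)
Your argument is correct and is essentially the cited proof: the paper gives no proof of its own beyond the reference to Spanier, where the lemma is proved by this same straight-line homotopy, with contiguity placing $|f|(x)$ and $|g|(x)$ in the common closed simplex $|f(\sigma)\cup g(\sigma)|$. Working with the abstract realization instead of the paper's embedded union of convex hulls in $\R^n$ is the right choice, since that embedded union need not be a genuine realization when the points are not in general position.
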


\begin{lemma}\label{lem:interleaving}
Let $K \subset (X,d_X)$ and   $L \subset (Y,d_Y)$ be finite subsets of metric spaces $X$ and $Y$. 
If there is a scalar $\delta \geq 0$ and there are maps $f: K \rightarrow L$ and  $g: L \rightarrow K$ satisfying the following:
\begin{enumerate}
    \item If $d_X(k_i, k_j) \leq 2t$, then $d_Y(f(k_i), f(k_j)) \leq 2t + 2\delta$, 
    \item if $d_Y(l_i, l_j) \leq 2t$, then $d_X(g(l_i), g(l_j)) \leq 2t + 2\delta$, 
    \item if $d_X(k_i,k_j) \leq 2t$, then $d_X(k_i,g\circ f (k_j)) \leq 2t + 4\delta$
    \item if $d_Y(l_i,l_j) \leq 2t$, then $d_Y(l_i,f\circ g (l_j)) \leq 2t + 4\delta$, 
\end{enumerate}
then $VR(K, d_X)$ and $VR(L, d_Y)$ are $\delta$-interleaved
\end{lemma}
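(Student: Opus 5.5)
The plan is to take the interleaving maps to be the vertex maps $f$ and $g$ themselves at every scale. That is, I define $f_t \colon \VR_t(K) \to \VR_{t+\delta}(L)$ and $g_t \colon \VR_t(L) \to \VR_{t+\delta}(K)$ on vertices by $f$ and $g$. I then check three things: that these are simplicial maps, that the ``shift'' squares commute, and that the two triangles commute up to homotopy. For the triangles I reduce to contiguity and invoke \cref{lemma:contiguous_homotopy}.

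\emph{Step 1: well-definedness.} Let $\sigma \in \VR_t(K)$, so $d_X(k_i,k_j) \le 2t$ for all $k_i,k_j \in \sigma$. By condition~1, $d_Y(f(k_i),f(k_j)) \le 2t+2\delta = 2(t+\delta)$. Hence $f(\sigma) \in \VR_{t+\delta}(L)$, and $f_t$ is simplicial. The same argument with condition~2 handles $g_t$. For $t<0$ the complexes are empty and every statement is vacuous, so I may assume $t \ge 0$.

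\emph{Step 2: the squares in the second row.} Take $t \le s$. The composite of the inclusion $\VR_t(K) \hookrightarrow \VR_s(K)$ with $f_s$ and the composite of $f_t$ with the inclusion $\VR_{t+\delta}(L) \hookrightarrow \VR_{s+\delta}(L)$ are both given on vertices by $f$. So they are equal as simplicial maps, and the square commutes on the nose, hence after realization. The square for $g$ is identical.

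\emph{Step 3: the triangles (the main step).} I must show that $g_{t+\delta} \circ f_t$ is homotopic, after realization, to the inclusion $\iota \colon \VR_t(K) \hookrightarrow \VR_{t+2\delta}(K)$. By \cref{lemma:contiguous_homotopy} it suffices to show the two maps are contiguous. That means: for every $\sigma \in \VR_t(K)$, the set $\sigma \cup g f(\sigma)$ must have all pairwise distances at most $2(t+2\delta)$. There are three kinds of pairs.
\begin{itemize}
\item Pairs inside $\sigma$ have distance at most $2t$.
\item Pairs inside $gf(\sigma)$ have distance at most $2t+4\delta$, by applying condition~1 and then condition~2 at scale $t+\delta$.
\item Cross pairs $(k_i, gf(k_j))$ with $k_i,k_j \in \sigma$ have distance at most $2t+4\delta$ by condition~3. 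This includes the case $i=j$, since $d_X(k_i,k_i)=0\le 2t$.
\end{itemize}
The cross terms are exactly where the extra hypothesis is needed; the rest is formal. So the union is a simplex of $\VR_{t+2\delta}(K)$, and the two maps are contiguous, hence homotopic after realization. The other triangle, $f_{t+\delta}\circ g_t$ versus $\VR_t(L)\hookrightarrow\VR_{t+2\delta}(L)$, follows symmetrically using conditions~2, 1 and~4.

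Combining Steps~1--3 verifies every diagram in \cref{def:interleaving}, so $\VR(K,d_X)$ and $\VR(L,d_Y)$ are $\delta$-interleaved.
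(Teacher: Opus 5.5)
Your proof is correct and follows the same route as the paper, whose proof is just the one-line remark that the lemma follows from the contiguity lemma (\cref{lemma:contiguous_homotopy}). You supply the details the paper leaves implicit: using $f,g$ as vertex maps at every scale, observing that the shift squares commute strictly, and verifying contiguity of $g_{t+\delta}\circ f_t$ with the inclusion via conditions 1--3 (and dually via conditions 2, 1 and 4).
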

\begin{proof}
    This follows from \cref{lemma:contiguous_homotopy}. 
\end{proof}

\section{Proofs}\label{app:proofs}

\begin{proof}[Proof of \cref{prop:snapping_distance}]
    We only show the first of the statements. By locality we have that
    \[
        \Var_{\rR}(r_{i}) = \sum_j w^{(i)}_j \| \phi(r_{i}) - c_{j}\|_2^2 \leq \epsilon.
    \]
    Note that 
    \[
        \|\phi(r_{i}) - \Phi(r_{i})\|_2^2 
         = \sum_j w_j^{(i)} \|\phi(r_{i}) - \Phi(r_{i})\|_2^2 
         \leq \sum_j w^{(i)}_j \| \phi(r_{i}) - c_{j}\|^2_2 \leq \epsilon.
    \]
    Here we have used that $\sum_j w_j^{(i)} = 1$ and that $\|\phi(r_{i}) - \Phi(r_{i})\| \leq \| \phi(r_{i}) - c_j\|$ for all $j$ by definition of $\Phi$.
    We conclude that $\|\phi(r_{i}) - \Phi(r_{i})\|_2 \leq \epsilon^{1/2}$ by taking the square-root on both sides.
\end{proof}

\begin{proof}[Proof of \cref{prop:covered}]
Let $r_i$ be a row. By definition of $\psi$, being linearly extended by the convex hull we have that \[\psi \circ \phi(r_i) =  \psi(\sum_j w^{(i)}_jc_j) =\sum_j w^{(i)}_j\psi (c_j).  \]
Hence,
\[
\|r_i - \psi \circ \phi(r_i)\|_2^2 = \|r_i - \sum_j w^{(i)}_j\psi (c_j) \|_2^2 \leq \sum_j w^{(i)}_j \|r_i - \psi(c_j) \|_2^2 \leq \epsilon
\]
by using the Jensen's inequality and that $M$ is $\epsilon$-covered.
By taking the square root on both sides we arrive at the statement.
\end{proof}

\begin{proof}[Proof of \cref{thm:interleaving}]
To prove an interleaving we use \autoref{lem:interleaving}, 
    we only prove statement $1$ and $3$ there as the other two are dual. 
    For $1$ we assume that $\| r_{i} -r_{j} \|_2 \leq 2t$, then from  the $1$-Lipschitz assumption on the barycentric maps, we have that $\|\phi(r_i) - \phi(r_j) \|_2 \leq 2t.$
    By \autoref{prop:snapping_distance} and the triangle inequality we have that 
    \begin{align*}
    \| \Phi(r_i) - \Phi(r_j) \|_2 &\leq \| \Phi(r_i) - \phi(r_i) \|_2 + \| \phi(r_i) - \phi(r_j) \|_2 +
    \| \Phi(r_j) - \phi(r_j) \|_2 \\
    &\leq 2t + 2\epsilon^{1/2}.
    \end{align*}
    For part 3 in \autoref{lem:interleaving} we
    assume that $ \| r_i -  r_j \|_2 \leq 2t$, then by the triangle inequality we have that
    $$\| r_i -\Psi \circ\Phi (r_j)\|_2 \leq \| r_i -  r_j\|_2 + \| r_j -  \Psi \circ\Phi (r_j)\|_2.$$

    By the triangle inequality applied to the last summand and \cref{prop:snapping_distance} we have that 
    \[
    \| r_j - \Psi\circ \Phi (r_j)\|_2 \leq \| r_j -  \psi \circ\Phi (r_j)\|_2 +  \| \psi \circ \Phi (r_j) - \Psi \circ\Phi (r_j)\|_2 \leq\| r_j -  \psi \circ\Phi (r_j)\|_2 + \epsilon^{1/2}.
    \]
    By the triangle inequality on the first summand and  \autoref{prop:covered} we have that 
    \[
    \| r_j -  \psi \circ \Phi (r_j)\|_2 \leq  \| r_j -  \psi \circ \phi (r_j)\|_2 +  \| \psi \circ \phi (r_j)- \psi \circ \Phi (r_j)\|_2 \leq \epsilon^{1/2} + \| \psi \circ \phi (r_j)- \psi \circ \Phi (r_j)\|_2.
    \]
   By applying the $1$-Lipschitz assumption of $\psi$ and \autoref{prop:snapping_distance} on the second summand we have that 
    \[
     \| \psi \circ \phi (r_j)- \psi \circ \Phi (r_j)\|_2 \leq \|  \phi (r_j)- \Phi (r_j)\|_2 \leq \epsilon^{1/2}.
    \]
    By putting everything together we have that 
    \[
    \| r_i -\Psi \circ\Phi (r_j)\|_2 \leq 2t + 3\epsilon^{1/2} \leq 2t + 4\epsilon^{1/2}
    \]
    proving the statement.
\end{proof}

\section{Derived Lipschitz Constants}
We give a quick result that allows to express the Lipschitz constant of the barycentric maps in terms of those of the normalization kernel. 

\begin{proposition}[Derived Lipschitz constants]\label{prop:derived_lip}
Let $M \in \R_+^{m \times n}$ with rows $\rR = \{r_1, \ldots, r_m\} \subset \R^n$ and columns $\cC = \{c_1, \ldots, c_n\} \subset \R^m$. Suppose the normalization kernels $\sigma_R$ and $\sigma_C$, viewed as maps $(\R^n_+ \setminus \{0\}, \|\cdot\|_2) \to (\Delta^{n-1}, \|\cdot\|_1)$ and $(\R^m_+ \setminus \{0\}, \|\cdot\|_2) \to (\Delta^{m-1}, \|\cdot\|_1)$, have Lipschitz constants $L_{\sigma_R}$ and $L_{\sigma_C}$ respectively. Then the barycentric maps satisfy
\[
\|\phi(r_i) - \phi(r_j)\|_2 \leq K_\phi \cdot \|r_i - r_j\|_2, \qquad \|\psi(c_j) - \psi(c_k)\|_2 \leq K_\psi \cdot \|c_j - c_k\|_2,
\]
where
\[
K_\phi = L_{\sigma_R} \cdot \max_{j} \|c_j\|_2, \qquad K_\psi = L_{\sigma_C} \cdot \max_{i} \|r_i\|_2.
\]
\end{proposition}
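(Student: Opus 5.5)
The plan is to work directly from the closed form of the barycentric maps in the Remark after \cref{def:barycenters}, and to reduce the Lipschitz estimate for $\phi$ to the Lipschitz estimate for $\sigma_R$ by a weighted triangle inequality. By that remark,
\[
\phi(r_i) = w^{(i)} M^T = \sum_{k=1}^n w^{(i)}_k c_k, \qquad w^{(i)} = \sigma_R(r_i).
\]
So $\phi(r_i)$ is the point of $\conv(\cC)$ whose coefficients are given by the probability vector $\sigma_R(r_i)$. Subtracting the two expressions for $r_i$ and $r_j$ gives
\[
\phi(r_i) - \phi(r_j) = \sum_{k=1}^n \bigl(w^{(i)}_k - w^{(j)}_k\bigr) c_k .
\]
The right-hand side is a linear combination of the columns whose coefficient vector is $\sigma_R(r_i)-\sigma_R(r_j)$.

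Next I apply the triangle inequality in $\R^m$ and pull out the largest column norm:
\[
\|\phi(r_i) - \phi(r_j)\|_2 \le \sum_k \bigl|w^{(i)}_k - w^{(j)}_k\bigr|\,\|c_k\|_2 \le \Bigl(\max_k \|c_k\|_2\Bigr)\,\|\sigma_R(r_i) - \sigma_R(r_j)\|_1 .
\]
This is the only place the choice of $\|\cdot\|_1$ on the simplex matters. It is exactly the dual pairing of $\ell^1$ coefficients against the $\ell^\infty$ bound $\max_k\|c_k\|_2$ on the generating vectors. This is why the hypothesis is stated for $\sigma_R$ as a map $(\R^n_+\setminus\{0\},\|\cdot\|_2)\to(\Delta^{n-1},\|\cdot\|_1)$.

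Then I invoke the assumed Lipschitz property, $\|\sigma_R(r_i)-\sigma_R(r_j)\|_1 \le L_{\sigma_R}\|r_i-r_j\|_2$. This is legitimate because every row is nonzero by the standing assumption on $M$, so $\sigma_R$ is defined at $r_i$ and $r_j$. Combining the two displays gives $K_\phi = L_{\sigma_R}\max_j\|c_j\|_2$.

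The statement for $\psi$ is the same argument applied to $M^T$. Write $\psi(c_j) - \psi(c_k) = \sum_i (v^{(j)}_i - v^{(k)}_i) r_i$, bound its norm by $\max_i\|r_i\|_2\,\|\sigma_C(c_j)-\sigma_C(c_k)\|_1$, and apply the Lipschitz bound for $\sigma_C$.

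I do not expect a genuine obstacle; the argument is a two-line estimate. The one point to keep straight is that the bound concerns $\phi$ evaluated at the actual rows through $\sigma_R$, not the linear extension to $\conv(\rR)$. The difference is therefore controlled by the kernel's variation, not by linearity of $\phi$.
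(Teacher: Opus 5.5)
Your proposal is correct and follows the paper's proof. Both use the closed form to write $\phi(r_i)-\phi(r_j)=\sum_k (w^{(i)}_k-w^{(j)}_k)c_k$, bound its norm by $\|w^{(i)}-w^{(j)}\|_1\max_k\|c_k\|_2$ via the triangle inequality, apply the Lipschitz property of $\sigma_R$, and treat $\psi$ symmetrically.
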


\begin{proof}
We prove the bound for $\phi$; the bound for $\psi$ is identical with rows and columns exchanged. By the closed form $\phi(r_i) = \sum_j w^{(i)}_j c_j$ where $w^{(i)} = \sigma_R(r_i)$:
\[
\phi(r_i) - \phi(r_j) = \sum_k (w^{(i)}_k - w^{(j)}_k)\, c_k.
\]
By the triangle inequality:
\[
\|\phi(r_i) - \phi(r_j)\|_2 = \left\|\sum_k (w^{(i)}_k - w^{(j)}_k)\, c_k\right\|_2 \leq \sum_k |w^{(i)}_k - w^{(j)}_k| \cdot \|c_k\|_2 \leq \|w^{(i)} - w^{(j)}\|_1 \cdot \max_k \|c_k\|_2.
\]
By the Lipschitz property of $\sigma_R$:
\[
\|w^{(i)} - w^{(j)}\|_1 = \|\sigma_R(r_i) - \sigma_R(r_j)\|_1 \leq L_{\sigma_R}\, \|r_i - r_j\|_2. \qedhere
\]
\end{proof}

\section{Relationship to Topological Data Analysis}\label{app:tda}

We would like to discuss the relationship of this paper with ideas and methods developed in the field of topological data analysis.
An obvious connection is through the way we formalized our main result in terms of Vietoris--Rips filtrations and interleavings.
These concepts are already reviewed in Appendix \ref{app:prel}.
A second deep connection is with the substantial work on relational constructs and in particular Dowker duality and Dowker filtrations.
We discuss this connection here.

Given a relation~$A \subseteq I \times J$ on the sets~$I$ and~$J$, the \textit{Dowker row and column complexes} are defined as
\begin{align*}
    &D_R = \{\sigma \subseteq I \mid \exists\, j \in J \colon \sigma \times \{j\} \subseteq A\},  \\
    &D_C = \{\tau \subseteq J \mid \exists\, i \in I \colon \{i\} \times \tau \subseteq A\}.
\end{align*}
The terminology comes from interpreting the relation as a binary matrix where the rows are the elements of~$I$ and the columns the elements of~$J$.
This is also how we can start to see a bridge to the matrix~$M$ studied in this paper.
The above complexes were first introduced by Dowker in a seminal paper~\cite{dowker}.
The main result of that paper establishes a \textit{homology equivalence} between the row and column complexes.
This result and its stronger formulation as a homotopy equivalence are today known as \textit{Dowker duality}.

There is a remarkable variety of proofs for this duality result.
Without claiming to be exhaustive:
Bj\"orner~\cite{bjorner1995} gives a proof that relies on a good covering of the row complex in terms of columns and then invokes the nerve lemma;
Brun and Salbu~\cite{brun2023rectangle} construct a \textit{rectangle complex} that admits projections to the row and column complexes and then apply Quillen's Theorem~A to exhibit these as homotopy equivalences;
in another work, Brun and Grinberg~\cite{brun2024morse} use discrete Morse theory to prove the same result;
finally, Yoon~\cite{yoon2024dowker} gives three different proofs using a Galois connection derived from the relation, a relational join, and a relational product.

What ties all these proofs together is that they rely on some form of fiber lemma and, in order to apply it, some contractibility property.
To give an example, in Bj\"orner's proof it is crucial that the subcomplexes
\begin{align*}
    U_i = \{\tau \subseteq J \mid \{i\}\times\tau \subseteq A\} \subseteq D_C
\end{align*}
for~$i \in I$ form a good cover of the column complex~$D_C$.

Another extensive area of research in applied and computational topology concerns possible extensions of Dowker complexes and in particular filtrations of Dowker complexes.
One example starts from a non-binary matrix~$M$ and then studies the row and column complexes constructed after binarizing~$M$ at different threshold values.
This leads to filtrations of simplicial complexes and it is possible to lift Dowker duality to an appropriate functorial result in this setting~\cite{chowdhury2018functorial,virk2021rips}.
Another extension of the classical setting of Dowker duality is introduced by Robinson~\cite{robinson2022cosheaf}.
There, row and column complexes are combined into cosheaves where one complex is the base space and the other one provides the fibers.
The same paper also discusses the notion of \textit{Dowker total weight filtrations}, which annotate the simplices in a Dowker complex with the cardinality of their witnesses and derive filtrations from these.

In the classical Dowker duality setup the involved complexes are agnostic to the exact cardinality of witnesses.
This is actually crucial for all proofs of Dowker duality to go through, as it guarantees the aforementioned contractibility conditions that are needed in order to apply the various versions of fiber lemmas.
Once we condition the existence of simplices in the complexes on the number of witnesses---like, for example, in the Dowker total weight filtrations---duality breaks down in general.

We believe that here lies an important link to our current work.
The relationship of the Vietoris--Rips filtrations of the rows and columns of a matrix~$M$ can be understood as a \textit{geometric relaxation} of the classical Dowker duality setting.
Similar to the total weight filtrations discussed above, a duality between the row and column pictures of the data may not exist in general in this relaxation.
It is then interesting to study conditions on the matrix that recover it, and our coherence regularizer is an attempt at precisely this.

In general, we think that it can be very fruitful to examine all possible proofs of Dowker duality and their detailed intricacies in order to learn about the different failure modes that can arise in geometric relaxations.
These failure modes can in turn inspire conditions that prevent such failure and thus guarantee some form of duality---like, for example, an interleaving.

In particular, the different perspectives offered by Yoon~\cite{yoon2024dowker} seem very interesting.
The relational join is perhaps close in spirit to a blown-up metric space that contains both the rows of a matrix and their barycenters of columns (or vice versa).
These could sometimes be interleaved, for example as suggested by the Dowker interleaving result of Chazal et al.~\cite{chazal2014persistence}.
Another interesting connection might be Application~4.2 in~\cite{yoon2024dowker}.
In general, spectral sequence computations with (not necessarily good) covers seem very fruitful in studying settings where Dowker duality holds only under appropriate conditions.

A deep connection, in our opinion, is through the general result that $\mathrm{hocolim}\, X_{\mathcal{U}} \simeq X$ for every cover of a topological space~$X$, not just good covers.
Here~$X_{\mathcal{U}}$ is an appropriate simplicial diagram of topological spaces; this is due to Segal~\cite{segal1968classifying} and later generalized by Dugger and Isaksen~\cite{dugger2004topological}.
This result can, for example, replace fiber lemmas in proofs of Dowker duality in order to obtain extensions of Dowker duality to diagrams of appropriate complexes, as pursued by Vaupel and Dunn~\cite{vaupel2023bifiltration}.

\end{document}